    \newcommand\newdot{{\kern.8pt\cdot\kern.8pt}}
\newcommand\nbull{{\kern.8pt\raise1.5pt\hbox{\small\bf .}\kern.8pt}}
\newcommand\1{\hbox{\kern.375em\vrule height1.57ex depth-.1ex
		width.05em\kern-.375em \rm 1}}
\newcommand\R{\mathbb{R}}
\newcommand\rbb{\mathbb{R}}
\DeclareMathOperator*{\argmin}{arg\,min}
\DeclareMathOperator{\lip}{lip}
\DeclareMathOperator{\diam}{diam}
\newcommand\E{\mathbb{E}}
\DeclareMathOperator{\rad}{\mathfrak{R}}
\newtheorem{theorem}{Theorem}
\newtheorem{lemma}[theorem]{Lemma}
\newtheorem{proposition}[theorem]{Proposition}
\newtheorem{corollary}[theorem]{Corollary}
\theoremstyle{definition}
\newtheorem{definition}{Definition}
\theoremstyle{definition}
\title{Explainable Neural Networks with Guarantees: A Sparse Estimation Approach}
\date{}
\author[]{\textbf{Antoine Ledent}}
\author[]{\textbf{Peng Liu} \thanks{\{aledent,liupeng\}@smu.edu.sg}}
\affil[]{Singapore  Management University}
\begin{document}

\maketitle
\def\thefootnote{*}\footnotetext{Both authors contributed equally to this work}
\begin{abstract}
Balancing predictive power and interpretability has long been a challenging research area, particularly in powerful yet complex models like neural networks, where nonlinearity obstructs direct interpretation. This paper introduces a novel approach to constructing an explainable neural network that harmonizes predictiveness and explainability. Our model, termed SparXnet, is designed as a linear combination of a sparse set of jointly learned features, each derived from a different trainable function applied to a single 1-dimensional input feature. Leveraging the ability to learn arbitrarily complex relationships, our neural network architecture enables automatic selection of a sparse set of important features, with the final prediction being a linear combination of rescaled versions of these features. We demonstrate the ability to select significant features while maintaining comparable predictive performance and direct interpretability through extensive experiments on synthetic and real-world datasets. We also provide theoretical analysis on the generalization bounds of our framework, which is favorably linear in the number of selected features and only logarithmic in the number of input features. We further lift any dependence of sample complexity on the number of parameters or the architectural details under very mild conditions. Our research paves the way for further research on sparse and explainable neural networks with guarantee.
\end{abstract}

% \begin{links}
%    \link{Extended version}{https://aaai.org/example/extended-version}
% \end{links}

\section{Introduction}

Neural networks have achieved state-of-the-art performance in multiple domains, from image and speech recognition~\citep{he2016deep,he2020resnet,lenet,inception} to natural language processing~\citep{devlin2018bert,church2017word2vec,tan2022exploring}. However, their complex architectures and the vast number of parameters mean that they can only be understood as `black box' models, where the decision-making process is not interpretable. Furthermore, neural networks typically involve millions of trainable parameters or even more, leading to poor understanding of their generalization behavior~\citep{Spectre}. This lack of transparency and explainability is bound to raise concerns, especially in high-stakes domains like healthcare, finance, and autonomous driving, where understanding the reasoning behind predictions is paramount.

Whilst many works attempt to analyze the predictions made by neural networks to make them more interpretable~\citep{Zhou2018InterpretingDV,DBLP:journals/corr/ZhouKLOT14,dhurandhar2018explanations,goyal2019counterfactual,varshneya2021learning}, the explanations produced are still far from the aim of providing an easily computable decision function which humans can understand and manipulate. For instance, in the computer vision literature, most works focus on visualizing concepts learned by individual neurons in intermediary layers. This doesn't fully explain how such concepts are learned from the pixel data or how these concepts are aggregated to produce a final prediction.  Similarly, in natural language processing, an interpretable method attempts to identify which words were given greater importance in generating the prediction; it is still difficult to fully explain how the model's prediction has utilized grammatical concepts or subtle clues such as irony. 

Although many of the issues above are arguably tied to the intrinsic complexity of typical machine learning problems such as computer vision and natural language processing, there is a lack of trustworthy and interpretable models in other domains, such as healthcare and finance, where the data are sometimes much lower-dimensional and where each feature corresponds to a concrete concept. These domains often require identifying a small set of relevant features, from a possibly high-dimensional space, that contribute most significantly to the outcome of interest. Reducing the dimensionality of input data also helps mitigate the curse of dimensionality and reduces the risk of overfitting. However, many feature selection methods often involve heuristics or statistical measures on feature ranking that are decoupled from the model training process.  Other methods, such as involve  $L^1$-norm regularization~\citep{roth2004generalized,zou2006adaptive,tibshirani2013lasso}, performs simultaneous feature selection and model estimation, but the resulting function typically depends on all selected features in an untractable way, making the final model less interpretable in complex models such as neural networks.

In this paper, we propose to tackle these issues by introducing an extremely parsimonious class of functions represented by neural networks. Our model SparXnet automatically selects a small number of important input features by applying a neuron-wise softmax transformation to all weights $W^k_u$ between the first and second layers for $u \in \{1,\cdots, d\}$ and $k \in \{1,\cdots, K\}$, where $d$ and $K$ denote the number of neurons in the first and second layers, respectively. Upon saturation, each neuron $k$ in the second layer will have one incoming weight close to one and the rest close to zero, thus selecting a single feature of the input data. In the next layer, SparXnet learns a separate one-dimensional function $f_k:\R \rightarrow \R$ for each selected feature. The final predictions (or logit scores in classification problems) are a linear combination of the outputs of each $f_k$. We represent the functions $\{f_k\}$ as deep neural networks, which are trained jointly in the entire model. 

The specific architecture of SparXnet has two immediate consequences: (1) predictions are highly interpretable: indeed, not only can we recover the selected features from the learned weights of the first layer and understand how much importance is given to each of them by looking at the weights of the last layer, but we can also plot the one-dimensional functions $\{f_k\}$ to investigate the individual effects each feature has on the final prediction, the same mechanism as the linear regression framework. For instance, suppose we are trying to predict the likelihood of heart disease in patients based on several factors such as BMI, number of cigarettes smoked per day, and blood pressure. If the model selects systolic blood pressure as one of the relevant features and the corresponding function $f_k$ exhibits a sudden sharp increase at 120 mm Hg, it indicates a threshold phenomenon around that value, thus providing a clear, interpretable threshold for assessing heart disease risk (2) Our model's function class capacity is drastically reduced compared to a traditional neural network. This is achieved in two ways: on the one hand, the feature selection forces the model to focus on a small number of features, which reduces the complexity of the model. Indeed, we provide generalization bounds for SparXnet, which indicate that the sample complexity is linear in the number of selected features, but only logarithmic in the number of features present in the data. In addition, with the very mild assumption that the functions $\{f_k\}$ are Lipschitz continuous, the sample complexity can completely avoid any dependence on the number of parameters or the architectural details of the models representing each of those functions, since the class of Lipschitz continuous functions itself has low sample complexity. Note that this would not be possible in higher dimensions, since the sample complexity of the space of Lipschitz functions is exponential in the input dimension. 

Our contributions can be summarized as follows: 
\begin{itemize}
	\item We propose an explainable neural network architecture that performs feature selection and model estimation simultaneously. Our model is parsimonious in that it only applies one-dimensional functions to each chosen feature before combining them with a linear layer. 
	\item We prove generalization bounds for SparXnet, which show a favorable sample complexity of $O\left( KL^2\log^3(d+L+1)\right)$, where $K$ is the number of selected features, $L$ is the Lipschitz constant of the learned one-dimensional functions, and $d$ is the original number of input features. In particular, the sample complexity of SparXnet doesn't involve the number of parameters of the networks used to represent the feature transformations $\{f_k\}$, or any other architectural details, and only depends logarithmically on the number of input features in the data. 
	\item To validate our method, we evaluate it on synthetic datasets with several noisy features and one informative feature. SparXnet exhibits superior performance compared to the standard neural network and successfully recovers the true feature and underlying one-dimensional function. In addition, the performance is relatively stable as we add more noisy features, whilst that of the standard neural network baseline deteriorates very fast.  
	\item Finally, we evaluate SparXnet on six real-life datasets, including adult income, breast cancer, credit risk, customer churn, heart disease, and recidivism. We achieve comparable or even superior results compared to feed-forward neural networks and other benchmark models, while preserving a much more interpretable and parsimonious model. We plot the feature transformation functions to further discuss the explainability of our model. 
\end{itemize}

The rest of this paper is organized as follows. In Section \ref{sec:relworks}, we discuss the related works and the improvements we seek to add to existing research. In Section~\ref{sec:methodology}, we introduce our notation and describe SparXnet in detail. In Section~\ref{sec:theory}, we prove the sample complexity guarantees for SparXnet. In Section~\ref{sec:experiments}, we discuss the results of our experiments on synthetic and real-world datasets. Finally, we conclude in Section~\ref{sec:conclusion}

\section{Related Works} \label{sec:relworks}

Making neural networks interpretable is a central concern in many machine learning applications~\citep{zhang2021survey}. For instance, in computer vision, many works attempt to interpret the specific type of features learned by each individual neuron by visualizing the associated representations~\citep{Simonyan2013DeepIC,DBLP:journals/corr/ZeilerF13}. This can then be compared with subjective human-understandable concepts~\citep{Zhou2018InterpretingDV,DBLP:journals/corr/ZhouKLOT14,DBLP:journals/corr/ZeilerF13}. Similarly, in natural language processing, some works attempt to interpret neural networks' representations in terms of the words they use~\citep{dalvi2019one}. In biological applications such as DNA sequence analysis, many authors attempt to interpret hidden representations as matching the search for certain explicit amino acid sequences~\citep{stormo1982use}. Other works focus on explaining neural network predictions through logic rules such as the presence or absence of certain specific features~\citep{dhurandhar2018explanations,goyal2019counterfactual,wachter2017counterfactual}. However, although several of those works focus on feature selection, none apply a trainable transformation function.  

The explanation of the performance of neural networks, despite their extremely high number of parameters, is a well-developed and active area of research. Earlier works focused on bounding the function class capacity of neural networks in terms of architectural parameters such as the number of parameters or the norms of the weights~\citep{longsed,Spectre,graf2022measuring,Ledent_21_Norm-based}. Since then, much of the literature has instead focused on the implicit regularization imposed by the gradient descent procedure and by the underlying structure in the data distribution~\citep{gradientover,NTK,Aroratwolayers,weima,inversepreac}. However, although many of these works rely on the Lipschitz constants of the network to bound complexity, none leverage the especially simple form of one-dimensional functions to sidestep the need for any other contributing terms (such as the norms of the weights or the complexity of the data distribution). It is worth noting that a particularly interesting new line of work~\citep{jacot2023implicit} has identified the phenomenon that neural networks with standard weight decay regularization may naturally restrict their `bottleneck rank': irrespective of the number of neurons present in each layer, a lower-dimensional representation of the input is naturally learned in intermediary layers. In particular, this indicates that in the case of a single function, standard neural networks may naturally strive to achieve a similar type of function as the ones learned by SparXnet. However, SparXnet involves several distinct one-dimensional feature-transformation functions rather than one single function whose input space has a small dimension that is still larger than one. In addition, ~\cite{jacot2023implicit} focuses on the training dynamics, while our work focuses on interpretability and generalization. 

The idea of using continuous one-dimensional feature transformations on several features goes back to early work in the statistics community on generalized additive models~\citep{hastie1986generalized, sardy2004amlet}. In particular, in projection pursuit regression~\citep{friedman1981projection}, a linear combination of features is fed through a nonlinear map, though unlike our work, no softmax is used to encourage the selection of a specific feature. Several distinguishing characteristics of our work compared to such early works are (1) the inclusion of generalization bounds from the point of view of modern statistical learning theory,  (2) the use of neural networks to model the nonlinear functions and (3) the introduction of feature selection with our softmax operator. 

A closely related line of research in modern literature is the Neural Additive Model (NAM) proposed by \cite{NAM}, which also addresses the challenge of achieving high predictive power while maintaining interpretability through generalized linear models. Our model differentiates itself from NAM in several key aspects. Firstly, we incorporate sparse estimation, which automatically selects the relevant set of features, thereby enhancing generalization performance. Furthermore, our work provides theoretical generalization bounds for the interpretable neural network, a contribution that is absent in previous studies.

\section{Methodology}
\label{sec:methodology}
Suppose our input data consists of i.i.d. samples $(x^1,y^1),$ $(x^2,y^2),\ldots,(x^N,y^N)$, where $y^i$ is the label and our inputs $x^i\in\R^d$ contains $d$ interpretable features. For instance, in our real data experiments on heart disease, examples of individual features include the resting heart rate (in beats per minute) and the average blood pressure. We assume that the individual features are suitably normalised so that $\|x^i\|_{\max}\leq \chi$  for some constant $\chi$. Our aim is to simultaneously select a small number $K\ll d$ of the input features and learn $K$ transformation functions $f^1,f^2,\ldots,f^K: \R\rightarrow \R$. Each function is represented as a deep neural network for ease of training and will be used to generate target prediction via a linear combination in the final layer. Thus SparXnet's prediction takes the following form:
\begin{align}
	\label{eq:thefirstequations}
F(x)=    \beta + \sum_{k=1}^K \theta_k f_k\left( \sum_{u=1}^d W^k_u x_u    \right).
\end{align}
where we assume an upper bound $\Gamma$ on $\sum_k |\theta_k|$. $F(x)$ is a scalar prediction in regression and logit score in classification, where the predicted probability for the positive class is $\frac{\exp(F(x))}{1+\exp(F(x))}$. $\beta$ is the bias term, $\{\theta_k\}_{k=1}^K$ denote parameters of the last linear layer, and $\{f_k\}_{k=1}^K$ are $K$ trainable functions represented by separate neural networks. Each $W^k_u$ is determined by the following formula: 
	\begin{align}
		W^k_{\nbull}&= \text{softmax}\left(   w^k_{\nbull}\right) \quad \quad \quad \text{i.e.} \nonumber\\ 
		W^k_u &=\frac{\exp(w^k_u/\tau)}{\sum_{v=1}^d\exp(w^k_v/\tau)} \quad \quad \quad (\forall u\leq d),
	\end{align}
where $\{w^k_{\nbull}\}$ are trainable parameters for the $k^{th}$ sub neural network and $\tau$ is a tunable temperature hyperparameter.

Thus, after softmax transformation, each row of the first-layer weight matrix $W$ represents a probability distribution reflecting the relative importance of each input feature for the corresponding neuron, thus prioritizing certain input features over others. As the model is trained, it learns the optimal weight distribution that minimizes the loss function, effectively performing feature selection and model estimation at the same time. In addition, the temperature parameter $\tau$ controls the `sharpness' of the softmax output. The higher the temperature, the more uniform the output distribution will be. Conversely, a lower temperature will make the distribution more sharply peaked. 

Figure~\ref{fig:model} illustrates the model architecture with two sub networks ($K=2$) with $x_1$ and $x_3$ being the selected input features\footnote{Our model allows the user to specify the number of features to remain, which offers a more precise control over model sparsity as opposed to indirectly configuring the penalty coefficient in Lasso regression.}. This involves two distinct processing pathways in a feed-forward neural network. The network enforces a softmax operation applied to the weights of the first hidden layer, where softmax serves as a soft form of `routing' mechanism, allowing the network to learn and distribute the representation of different data characteristics across the two pathways, thus achieving adaptive feature selection. For instance, the first input feature $x_1$ gets selected via a linear combination $\sum_u W^1_u x_u$  of the six input features with $W^1_1\gg W^1_u$ (for $u\neq 1$), where the dominating weight $W^1_1$ is denoted by the solid line (close to saturation) and the rest as dashed line. After softmax, the two nonlinear mapping functions $f_1(x_1)$ and $f_2(x_3)$ are automatically learned and linearly combined to generate the final prediction. Overall, training parameters include $\{w^k_u\}$ ($k\leq K$ and $u\leq d$) for the first two layers, $\beta$ and $\{\theta_k\}$ for the last two layers, and parameters of each individual fully connected network (FCN) $\{f_k\}$.

\begin{figure}
    \centering
    \includegraphics[width=1\linewidth]{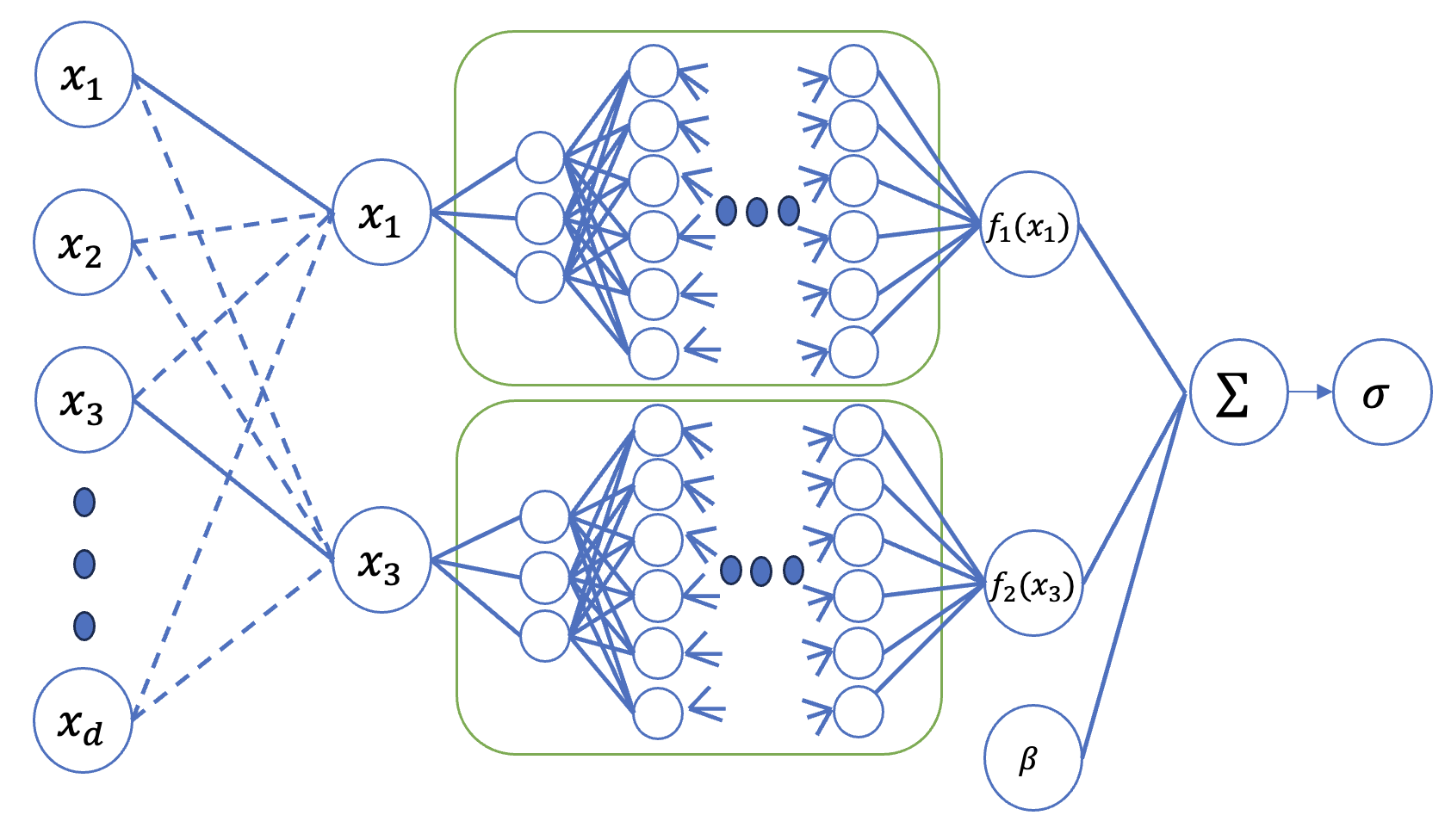}
    \caption{Schematic overview of the proposed model in the case of two selected features ($K=2$). The neural network has two distinct processing pathways, which are based on a softmax operation applied to the weights of the first hidden layer. The softmax operation serves as a soft form of ``routing" mechanism, allowing the network to learn and distribute the representation of different data characteristics across the two pathways, thus achieving adaptive feature selection upon saturation (as indicated by the two solid lines in the first layer). The two fully-connected mapping functions $f_1(x_1)$ and $f_2(x_3)$ are learned automatically and linearly combined to generate the final prediction.}
    \label{fig:model}
\end{figure}

\section{Theoretical Analysis}
\label{sec:theory}
In this section, we study the sample complexity of SparXnet. Our main result is Theorem~\ref{thm:maintheorem}, whose proof is left to the Appendix. 

\begin{theorem}
	\label{thm:maintheorem}
	
	Consider the function class $\mathcal{F}$ defined above. Suppose we are given $N$ i.i.d. samples $\{(x^i,y^i)\}_{i=1}^N$ with $y^i\in\R$ for all $i\leq N$ and a loss function $\ell:\R^2\rightarrow \R$ which is bounded by $B$ and has a Lipschitz constant at most $\mathcal{L}$. 
	Let 
	\begin{align}
		\hat{f}:= \argmin_{f\in\mathcal{F}} \frac{1}{N}\sum_{i=1}^N  \ell\left(f(x^i),y^i\right) 
	\end{align}
	and 
	\begin{align}
		f^*:= \argmin_{f\in\mathcal{F}} \E_{x,y} \ell\left(f(x),y\right).
	\end{align}
	
	We have, with probability $\geq 1-\delta$ over the draw of the training set: 
	\begin{align}
		&\frac{1}{N}\sum_{i=1}^N  \ell\left(\hat{f}(x^i),y^i\right)  -\E_{x,y} \ell\left(f^*(x),y\right)\nonumber \\&\leq \frac{24\mathcal{L}}{\sqrt{N}} \left[15\chi L\Gamma \sqrt{K}+3\right]\sqrt{\log_2\left(12dN^2\left[\chi L\Gamma  +1\right]\right) }\log(N)  +6B\sqrt{\frac{\log(2/\delta)}{2N}}
	\end{align}
\end{theorem}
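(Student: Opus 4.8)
The plan is to bound the generalization gap via the empirical Rademacher complexity of the function class $\mathcal{F}$, then invoke a standard uniform-convergence bound. Since the loss $\ell$ is bounded by $B$ and $\mathcal{L}$-Lipschitz, the symmetrization inequality followed by the Ledoux--Talagrand contraction lemma reduces the problem to controlling $\mathfrak{R}(\mathcal{F})$, the Rademacher complexity of the \emph{unloss-composed} class of predictors $F$. The final additive $6B\sqrt{\log(2/\delta)/(2N)}$ term is exactly what McDiarmid's bounded-differences inequality produces, and the leading factor $24\mathcal{L}/\sqrt{N}$ signals two applications of the contraction/symmetrization machinery, so the real work is showing that the Rademacher complexity of $\mathcal{F}$ itself is at most roughly $\frac{1}{\sqrt{N}}\bigl[15\chi L\Gamma\sqrt{K}+3\bigr]\sqrt{\log_2(12dN^2[\chi L\Gamma+1])}\log(N)$.

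To control $\mathfrak{R}(\mathcal{F})$, I would first peel off the outer linear layer. Writing $F(x)=\beta+\sum_{k=1}^K \theta_k f_k(\langle W^k, x\rangle)$ with $\sum_k|\theta_k|\le\Gamma$, the prediction is a $\Gamma$-weighted convex-type combination of the $K$ one-dimensional feature maps composed with the softmax-constrained linear preactivations. The key structural observation is that each inner preactivation $z_k=\sum_{u=1}^d W^k_u x_u$ is a convex combination of the coordinates of $x$ (the softmax weights are nonnegative and sum to one), hence $|z_k|\le\|x\|_{\max}\le\chi$: the selected feature always lives in the bounded interval $[-\chi,\chi]$. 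This is where the dimension-free phenomenon enters --- the functions $f_k$ act on a one-dimensional bounded domain, so I can replace the intractable network class for each $f_k$ by the class of $L$-Lipschitz functions on $[-\chi,\chi]$, whose complexity is governed by a covering number that scales only with $\chi L$ and $\log(1/\epsilon)$, with \emph{no} dependence on the number of parameters or architecture.

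The central technical device I expect to use is a covering-number/Dudley-entropy-integral argument rather than raw Rademacher calculus, since the $\log^{3/2}$ and $\log(N)$ factors in the target bound are the hallmark of chaining. I would build an $\epsilon$-cover of $\mathcal{F}$ compositionally: (i) cover the space of softmax weight vectors $W^k$ --- crucially, because the softmax output is a probability distribution on $d$ atoms and the relevant quantity is $\langle W^k,x\rangle$ with $\|x\|_{\max}\le\chi$, a cover of size polynomial in $d$ (contributing the $\log d$ term and explaining the $12d$ inside the logarithm) suffices; (ii) cover the $L$-Lipschitz one-dimensional functions on $[-\chi,\chi]$ at scale $\epsilon$, giving entropy $O(\chi L/\epsilon)$; and (iii) cover the outer coefficients $\theta$ on the $\ell_1$-ball of radius $\Gamma$. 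Composing these covers, propagating Lipschitz constants through the composition, and feeding the resulting covering number into Dudley's integral should yield the $\sqrt{K}$ scaling (from the $K$ independent feature maps, combined via the $\ell_1$ constraint on $\theta$) together with the square-root-of-log factor, and setting the resolution $\epsilon$ to roughly $1/N$ produces the $N^2$ and the extra $\log(N)$ in the stated bound.

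The main obstacle will be (a) bounding the covering number of the softmax-parametrized linear layer in a way that is genuinely logarithmic in $d$ rather than linear --- this requires exploiting that only the value $\langle W^k,x\rangle\in[-\chi,\chi]$ matters and that a sparse/low-precision grid on the simplex discretizes this inner product, and (b) correctly propagating the Lipschitz constants through the composition so that the contraction across the nonlinear $f_k$ layer does not reintroduce architectural dependence. Keeping the Lipschitz bookkeeping tight through the chaining integral, and verifying that the three constituent covers multiply to a covering number whose logarithm is exactly $\log_2(12dN^2[\chi L\Gamma+1])$, is the delicate accounting that underlies the precise constants in the theorem.
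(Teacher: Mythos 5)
Your proposal follows essentially the same route as the paper's proof: a standard Rademacher/uniform-convergence bound with Talagrand contraction for the $\mathcal{L}$-Lipschitz loss, then a three-part compositional $L^\infty$ cover of $\mathcal{F}$ (a Maurey-type cover of the simplex-constrained first-layer weights giving the $\log d$, the dimension-free cover of $L$-Lipschitz functions on $[-\chi,\chi]$ exploiting $|\langle W^k,x\rangle|\le\chi$, and a cover of the $\ell_1$-ball for $\theta$), fed into Dudley's entropy integral with resolution $\alpha\approx 1/N$. The only minor slip is your attribution of the factor $24$ to ``two applications of contraction'': in the paper contraction is applied once, and the doubling (from $12\mathcal{L}$ to $24\mathcal{L}$, and $3B$ to $6B$) comes from applying the two-sided uniform deviation bound to both $\hat f$ and $f^*$ in the ERM risk decomposition.
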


We also obtain the following immediate corollary expressed in terms of excess risk $\epsilon$.

\begin{corollary}
	\label{cor:maincor}
	Assume the assumptions of Theorem~\ref{thm:maintheorem} hold with $\mathcal{L},\chi,\Gamma,B=O(1)$. The number of required samples to reach an excess risk of $\epsilon$ is $O\left( \frac{KL^2}{\epsilon^2} \log^3\left(\frac{KL^2 \log(d+L+1)}{\epsilon^2} \right)  \right)$.
\end{corollary}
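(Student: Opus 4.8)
The plan is to start from the high-probability bound of Theorem~\ref{thm:maintheorem} and simply invert it: I treat its right-hand side as a decreasing function of the sample size $N$, force it below the target excess risk $\epsilon$, and solve for $N$. No new probabilistic machinery is needed, since the corollary is a purely analytic consequence of the theorem; the only genuine work is that $N$ appears simultaneously as the leading $1/\sqrt N$ factor and inside two logarithmic factors, so the resulting inequality is transcendental and must be resolved by a self-consistency (fixed-point) argument rather than by direct algebra.

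First I would absorb the $O(1)$ quantities. Under the hypothesis $\mathcal{L},\chi,\Gamma,B=O(1)$ and with $\delta$ treated as a fixed constant, the bound of Theorem~\ref{thm:maintheorem} collapses to
\[
\frac{1}{N}\sum_{i=1}^N \ell(\hat f(x^i),y^i)-\E_{x,y}\ell(f^*(x),y)\;\leq\;\frac{c_1\,(L\sqrt K+1)}{\sqrt N}\,\sqrt{\log\!\big(c_2\,dN^2(L+1)\big)}\;\log N\;+\;\frac{c_3}{\sqrt N},
\]
for absolute constants $c_1,c_2,c_3$. The trailing $c_3/\sqrt N$ term carries no $K,L,d$ dependence and is dominated by the first term, so it is enough to force the first term below $\epsilon/2$; making the second term at most $\epsilon/2$ only costs $N=\Omega(1/\epsilon^2)$, which is absorbed into the final rate.

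Next I would collapse the two logarithmic factors into a single power. Since $N\geq 1$ gives $\log N\leq\log(c_2 dN^2(L+1))$, writing $\mathcal D:=\log(d+L+1)$ and $A:=KL^2/\epsilon^2$, demanding the first term to be at most $\epsilon/2$ and squaring yields the transcendental inequality
\[
N\;\geq\;c_4\,A\,\big(\mathcal D+2\log N\big)\,(\log N)^2,
\]
where the factor $\mathcal D+2\log N$ is just $\log(c_2 dN^2(L+1))$ rewritten and $\mathcal D$ plays the role of the (logarithmic) dimension dependence. Note that the argument of the corollary's outer logarithm, $\tfrac{KL^2\log(d+L+1)}{\epsilon^2}$, is exactly $A\mathcal D$, which is what the substitution below must reproduce.

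Finally I would solve this by substitution. Guessing the candidate $N_0=\kappa\,A\,\log^3(A\mathcal D)$ and plugging in, one checks $\log N_0=\log\kappa+\log A+3\log\log(A\mathcal D)=O(\log(A\mathcal D))$, so $(\log N_0)^2=O(\log^2(A\mathcal D))$; the right-hand side then becomes $O\!\big(A\,(\mathcal D+\log(A\mathcal D))\,\log^2(A\mathcal D)\big)$, which is $O(A\log^3(A\mathcal D))=O(N_0)$, so the inequality closes for $\kappa$ large enough, giving exactly the claimed $O\!\big(\tfrac{KL^2}{\epsilon^2}\log^3(\tfrac{KL^2\log(d+L+1)}{\epsilon^2})\big)$. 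The main obstacle is this last verification: because $N$ sits inside the logarithms, one must confirm self-consistency of the guessed order, and in particular that the dimension scale $\mathcal D$ is dominated by $\log(A\mathcal D)$, i.e. $\mathcal D+2\log N_0=O(\log(A\mathcal D))$. This holds in the standard regime where the feature count is at most polynomial in $KL^2/\epsilon^2$ (so that $\log(d+L+1)$ is absorbed into $\log(A\mathcal D)$), and it is precisely the point at which the elementary lemma ``$x\geq a\log^b x\Rightarrow x=O(a\log^b a)$'' (here with $b=3$) is invoked.
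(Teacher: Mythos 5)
Your proposal is correct and follows essentially the same route as the paper's own proof: invert the high-probability bound of Theorem~\ref{thm:maintheorem} to reduce to $N/\log^2 N \geq O(A\mathcal{D})$ with $A=KL^2/\epsilon^2$ and $\mathcal{D}=\log(d+L+1)$, then close the resulting transcendental inequality by a self-consistency argument --- the paper does this via the elementary lemma ``$x=y\log^2 y$, $x,y\geq 1$ $\Rightarrow$ $x/\log^2 x\geq y/3$'' applied with $y\asymp A\mathcal{D}$, which is your ``$x\geq a\log^b x \Rightarrow x=O(a\log^b a)$'' with $b=2$, followed by folding $\mathcal{D}$ into a third logarithm, whereas you guess $N_0=\kappa A\log^3(A\mathcal{D})$ directly and verify it with $b=3$; these are interchangeable.

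The one substantive difference is your explicit caveat that the final step needs $\mathcal{D}+2\log N_0=O(\log(A\mathcal{D}))$, i.e.\ $d$ at most polynomial in $KL^2/\epsilon^2$ --- and here you are being \emph{more} careful than the paper, not less. The paper derives the sufficient sample size $O\left(A\mathcal{D}\log^2(A\mathcal{D})\right)$ and then rewrites it as $O\left(A\log^3(A\mathcal{D})\right)$ without comment (indeed, its displayed chain places a ``$\geq$'' between the two expressions, which is the direction that does \emph{not} yield sufficiency of the smaller quantity). The replacement $A\mathcal{D}\log^2(A\mathcal{D})\leq O\left(A\log^3(A\mathcal{D})\right)$ holds exactly when $\mathcal{D}=O(\log(A\mathcal{D}))$, which is your polynomial-regime hypothesis; when $\log d$ is super-polynomial in $KL^2/\epsilon^2$, the honest consequence of the theorem is the $A\mathcal{D}\log^2(A\mathcal{D})$ form and the stated $\log^3$ bound undercounts. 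So your regime restriction is not an artifact of your method but a hypothesis the paper's proof implicitly uses. One minor quibble you share with the paper: since the bracket in Theorem~\ref{thm:maintheorem} is $15\chi L\Gamma\sqrt{K}+3$, your $A$ should strictly be $(KL^2+1)/\epsilon^2$, so the stated rate presumes $KL^2=\Omega(1)$; this is harmless and equally implicit in Corollary~\ref{cor:maincor} itself.
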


Thus, for fixed $\chi,\mathcal{L},B$, the sample complexity is $\tilde{O}(L^2K)$: up to logarithmic terms, the number of samples required to train the model effectively is proportional to the product of the number of chosen factors $K$ and the square of the bound on the Lipschitzness constant. In particular, the dependency on the original number of features $d$ is only logarithmic. The proof strategy relies on the fact that the set of Lipschitz functions with a \textit{low-dimensional} input (1 or 2 d) has a very mild complexity~\citep{von2004distance,Tikhomirov1993}. This fact was previously exploited in the case of one dimensional inputs in several other contexts such as Matrix Completion~\citep{Ledent2024ICML} and density estimation~\citep{vandermeulen2021beyond}.

In summary, our bound has two particularities, which makes it non-vacuous compared to standard generalization bounds for neural networks: 
\begin{itemize}
	\item  There is no dependency on the number of parameters, since the complexity of the classes of 1 to 1 transformation functions is computed purely based on the Lipschitz constant.  This contrasts most of the literature on generalization bounds for neural networks, which almost always depend on the number of parameters of the model, whether the dependence is explicit~\citep{longsed} or implicit~\citep{Spectre,graf2022measuring,Ledent_21_Norm-based}.
	\item The only non-negligible dependency on architectural parameters is on $K$, not $d$ (although the bound on the norms of the original features is $\|x\|_{\max}$), indicating that the choice of a small number of important features from a multitude of features present in the original data has negligible cost in terms of sample complexity. This indicates that selecting a small number of important features from a high-dimensional input space does not significantly increase the sample complexity, making the model efficient and practical even in high-dimensional settings. This advantage is similar to the characteristic of generalization bounds for multi-class or multi-label prediction scenarios where the dependence is only logarithmic in the total number of possible labels, whilst maintaining nonlogarithmic dependence only on the number labels present~\citep{lei2018multi,structured,wu2021fine}.
\end{itemize}

Note that Corollary~\ref{cor:maincor} applies to both the regression and classification settings, with simple modifications of the loss function $\ell$. In the regression setting, we can use the truncated square loss:
\begin{align}
	\ell(\hat{y},y)=\min(|y-\hat{y}|^2,B).
\end{align}
 In the binary classification setting, we can use the cross entropy loss (with the softmax in the prediction step absorbed):
 \begin{align}
 	\ell(s,0)=\log\left( 1+\exp(s)  \right)  \quad 	\ell(s,1)=\log\left( 1+\exp(-s)  \right).\nonumber 
 \end{align}
 
For simplicity, our theoretical results are provided for the binary classification or the regression setting. However, the model can be straightforwardly extended to the multi-class case.

\section{Experimental Results}
\label{sec:experiments}

In this section, we present the experimental results using both synthetic and real data sets. We allocate 20\% of the data to the test set in all experiments, with the hyperparameters tuned by cross-validation. Our evaluation of the proposed algorithm serves a dual purpose: first, to gauge its predictive power, and second, to assess its capability to retrieve the sparse set of true features accurately. Through comprehensive analysis, we will illustrate that our method establishes a superior equilibrium between predictive accuracy and sparsity compared with prevalent benchmark techniques, thus positioning our approach as an innovative and effective solution for explainable neural networks with guarantee.

\subsection{Synthetic Data Experiments}

\subsubsection{A single-variable case}

In our first synthetic data experiment, we are interested in assessing whether the proposed method can recover the true functional form of the underlying relationship in the presence of different noise levels in both the input features and the observation model. Assuming a ground truth function $y_*=x^2+2sin(x)+3$, we generate 1000 observations uniformly sampled within the interval of [-1,1]. We inject additive Gaussian noise $\epsilon \in \mathcal{N}(0,0.05)$ into the observations and add a total of $J$ noisy features $x_i \in \mathcal{N}(0,1)$ for $i \in \{1,...,J\}$ into the feature space. This allows us to assess the model's capability to discern and recover the true signal from noise.

We use one pathway with six fully connected layers to learn the underlying data-generating process and identify the true feature. Each hidden layer consists of 128 nodes, followed by a dropout layer. We use Bayesian optimization to optimize three hyperparameters: dropout rate (between 0.1 and 0.5), learning rate (between 0.001 and 0.01), and temperature (between 0.1 and 100). We used a high temperature to promote early exploration, as a premature selection of an incorrect input feature during the early training phase may hurt the training performance. The temperature is then slowly reduced to 1\% of its initial value throughout a total training budget of 2000 iterations. 

Figure \ref{fig:synthetic_function_recovery} illustrates the learned function for this synthetic regression problem that includes one true feature and two noisy ones. We intentionally position the true feature in the middle of the design matrix to circumvent the possible default choice of selecting the first feature upon saturation. The figure suggests that SparXnet can recover the true shape of the underlying function, despite adversarial perturbation in both the observational and feature spaces. Specifically, SparXnet correctly selects the second input feature in the first layer, as evidenced by the learned weights of 4.84171210e-07, 9.99999523e-01 and 3.66482689e-09 in the first hidden layer.

\begin{figure}
  \includegraphics[width=\linewidth]{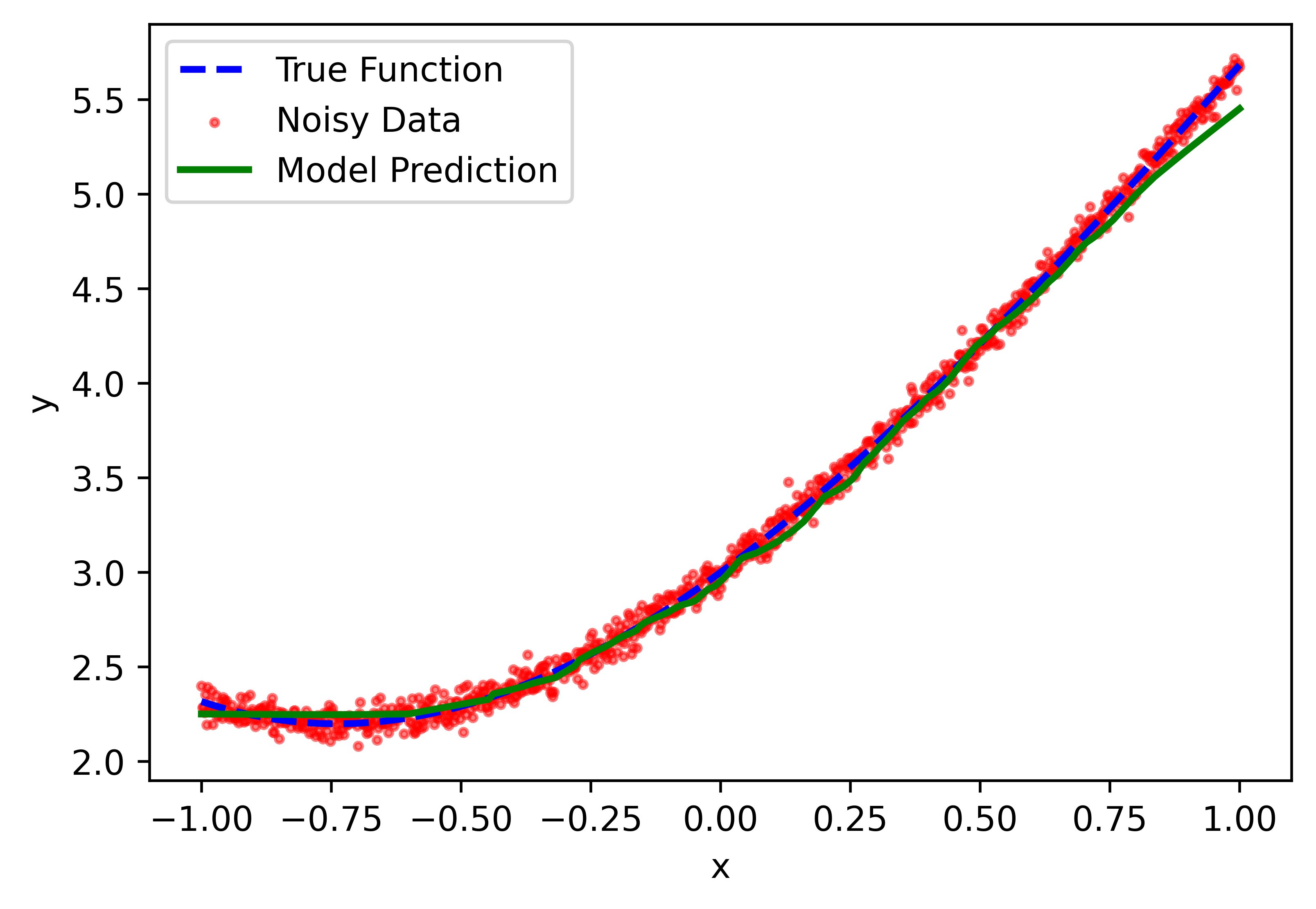}
  \caption{Visualizing the learned model predictions using 2000 noisy observations and three features, including one true feature and two random noise features. Our model can correctly identify the second feature as the true input feature (based on the learned weights of 4.84171210e-07, 9.99999523e-01 and 3.66482689e-09 in the first hidden layer) and recover the original form of the underlying data-generating function.}
  \label{fig:synthetic_function_recovery}
\end{figure}

To further assess our method under different signal-to-noise ratios, we progressively increase the number of noisy features from 2 to 5 while keeping the true feature in the central position of the design matrix. We use two benchmark methods for comparison: a fully connected neural network with the same number of layers and a Lasso regression model. As shown in Table \ref{synthetic_multiple_run} across five runs, SparXnet has a much lower test set MSE than others. This advantage is even more pronounced as we increase the number of noisy features in the data. We also observe that SparXnet can identify and recover the true feature in all experiments. This suggests that SparXnet can achieve good predictive performance while recovering the true feature simultaneously, when the ground truth is indeed sparse. The sparse solution also offers direct interpretability of the learned features, which will be further discussed in our next experiments.

\begin{table}[]
\caption{Comparing the mean and standard deviation of MSE for training and test sets across five runs. SparXnet has a much lower training and test set MSE than alternative models. This advantage is even more pronounced as we increase the number of noisy features in the data.}
\label{synthetic_multiple_run}
\resizebox{0.99\textwidth}{!}{ % Adjust the width as needed
\begin{tabular}{|p{1.5cm}|p{2cm}|p{1.5cm}|p{1.5cm}|p{1.5cm}|p{1.5cm}|}
\hline
\multirow{2}{*}{Data type} & \multirow{2}{*}{Model} & \multicolumn{4}{c|}{Number of noisy features} \\ \cline{3-6} 
                           &                        & 2                        & 3                        & 4                        & 5                        \\ \hline
\multirow{3}{*}{Training}  & SparXnet               & \textbf{0.0047} (0.0017) & \textbf{0.0038} (0.0009) & \textbf{0.0047} (0.0013) & \textbf{0.0052} (0.0023) \\ \cline{2-6} 
                           & FCN                    & 0.0313 (0.0145)          & 0.2839 (0.5547)          & 0.0253 (0.0024)          & 0.4415 (0.5792)          \\ \cline{2-6} 
                           & Lasso regression model & 0.1248 (0.0012)          & 0.1245 (0.0019)          & 0.1251 (0.0023)          & 0.1244 (0.0015)          \\ \hline
\multirow{3}{*}{Test}      & SparXnet               & \textbf{0.0048} (0.0014) & \textbf{0.0039} (0.0009) & \textbf{0.0048} (0.0013) & \textbf{0.0054} (0.0022) \\ \cline{2-6} 
                           & FCN                    & 0.0327 (0.0159)          & 0.2681 (0.5204)          & 0.0269 (0.0034)          & 0.4207 (0.5393)          \\ \cline{2-6} 
                           & Lasso regression model & 0.1182 (0.0137)          & 0.1204 (0.0146)          & 0.1175 (0.0141)          & 0.1191 (0.0146)          \\ \hline
\end{tabular}
}
\end{table}

\subsubsection{A multi-variable case}

We now extend the results to a more challenging case with multiple variables to answer the following question: How will the recovery rate of the true features and the predictive power change as we vary the level of sparsity (controlled by the number of pathways) in SparXnet? This problem has a high practical relevance when one intends to select a subset of features for easy interpretation but is unsure of the exact number of true features present in the underlying model.

To this front, we assume a highly non-linear underlying function with five true features: $y_{\text{true}} = \sin(x_{0}) + 2x_{1}^{2} - 3x_{2}^{2} + 4e^{x_{3}} - 5e^{x_{4}}$. We also include five standard normal features, which are randomly arranged in a design matrix comprising 1000 observations. We track the true feature recovery rate and test-set MSE at different sparsity levels to evaluate the trade-off between these two objectives. Intuitively, one would expect a lower test-set MSE and a higher true feature recovery rate as more input features are present in the model (corresponding to a low sparsity level). However, having excessive noisy features will increase test-set MSE despite a high recovery rate of true features. 

Figure \ref{fig:synthetic_multi_var} illustrates the relationship between predictive accuracy and recovery rate at different sparsity levels in all models, including SparXnet, Lasso regression, Ridge regression, decision tree, and FCN. For test-set MSE, SparXnet performs comparably to Lasso when fewer than four features are retained and significantly outperforms Lasso as more features are added to the final model by adjusting the sparsity level. When using all available input features, SparXnet only performs inferior to FCN, highlighting SparXnet's efficiency in extracting the most informative features for prediction.

Moreover, SparXnet demonstrates excellent performance in recovering the true features, often matching or exceeding the recovery rates of Lasso. An exception occurs when eight features are selected, where SparXnet slightly lags behind Lasso. This can be attributed to a potential overlapping coverage of features chosen by SparXnet, a complexity absent in Lasso due to its non-overlapping selection mechanism. It is also important to note that all other benchmark methods rely on the full set of features and, therefore, offer no sparsity.

\begin{figure}[htb!]
  \includegraphics[width=1\linewidth]{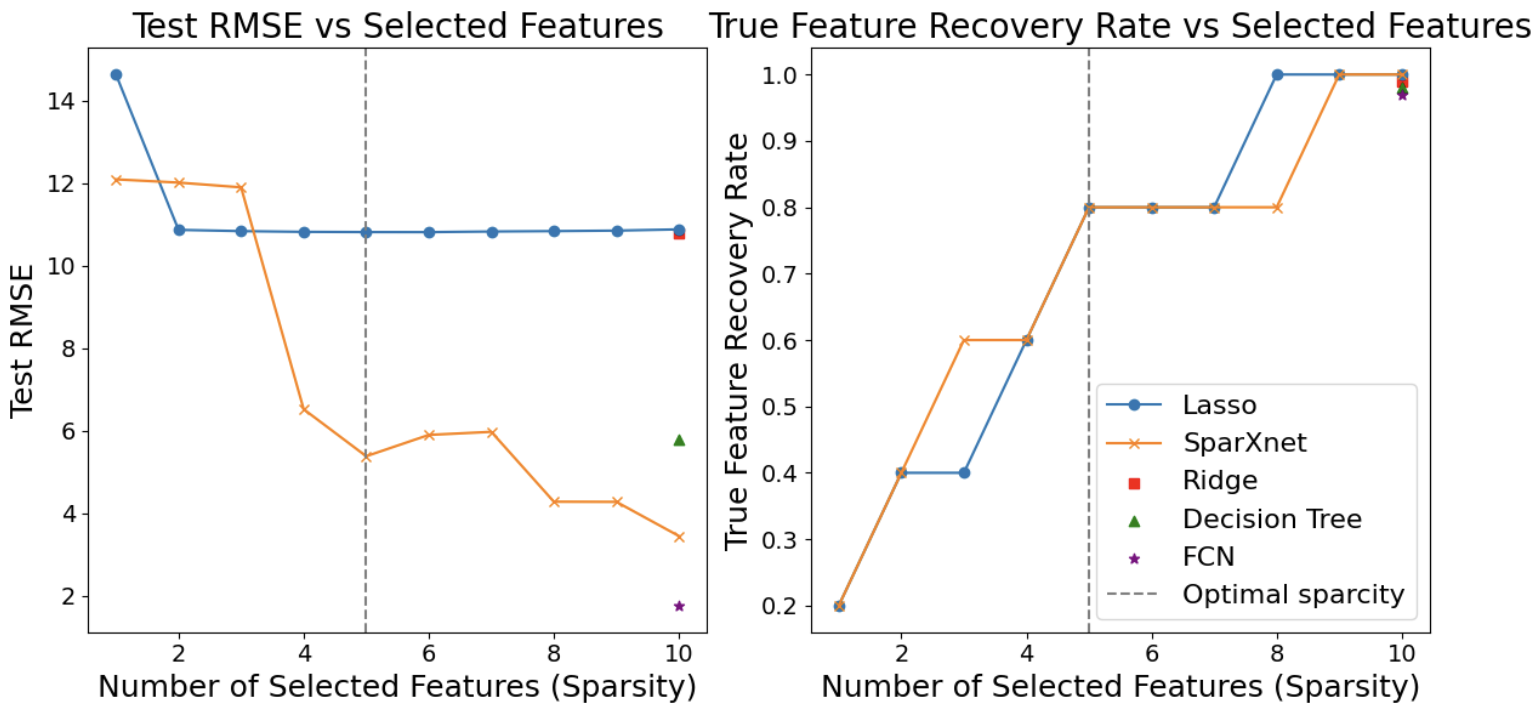}
  \caption{The dynamics between predictive accuracy and recovery rate of true features as the sparsity level varies across different models, including SparXnet, Lasso, Ridge regression, decision tree, and FCN. At high sparsity levels (fewer than four features), SparXnet has a higher test RMSE than Lasso. However, as the number of features increases, SparXnet exhibits enhanced performance, eventually surpassing Ridge regression and decision tree models, and approaching the performance of FCN. SparXnet also demonstrates superior or comparable recovery rates of true features relative to Lasso, except when eight features are selected, where potential overlapping coverage of features chosen by SparXnet slightly reduces its recovery rate.}
  \label{fig:synthetic_multi_var}
\end{figure}

\subsection{Real Data Experiments}

The sparse estimation approach shines in applications where one prefers a small subset of interpretable features. For example, a credit officer needs to rely on a small set of important features to make a credit lending decision, which must also be fully explainable when making pass/fail decisions. Similarly, a doctor often attributes disease to a small number of causes when explaining it to patients. Therefore, our experiments with real data are designed to cover these real-world scenarios that emphasize both predictiveness and explainability in a classification setting. We also assess a wider pool of benchmark models, including logistic regression, FCN, NAM, decision tree, and XGBoost. 

We implemented a consistent network architecture for all six datasets, including adult income, breast cancer, credit risk, customer churn, heart disease, and recidivism. We varied the number of nodes $K$ in the first hidden layer, corresponding to the number of features the model would select. In each set of experiments, we optimized the hyperparameters for SparXnet, including the learning rate, batch size, configurations of hidden layers, dropout rate, and the seed for the train-test split using Bayesian optimization based on the validation set. We then evaluated the optimal model configurations on the test set. Table \ref{tab:model_performance} displays the mean and standard deviation of the test set AUC from the top 5 (lowest validation loss) out of 30 repeated experiments. In particular, SparXnet consistently ranks among the top three models in all datasets, demonstrating robust predictive performance in various contexts. 

\begin{table*}[ht]
\centering
\caption{Average out-of-sample AUC, with standard deviations in parentheses, for various predictive models across different datasets. Notably, SparXnet consistently achieves top-three performance across all evaluated datasets.}
\label{tab:model_performance}
\begin{tabular}{|l|l|l|l|l|l|l|}
\hline
\textbf{Dataset} & \textbf{FCN}& \textbf{NAM}& \textbf{Log. Regr.} & \textbf{Decision Tree} & \textbf{XGBoost} & \textbf{SparXnet} \\
\hline
Adult & 0.887 (0.015) & 0.900 (0.003) & 0.854 (0.000) & 0.897 (0.002) & 0.924 (0.002) & 0.899 (0.008) \\
Breast & 0.988 (0.005) & 0.645 (0.088) & 0.998 (0.000) & 0.953 (0.004) & 0.995 (0.001) & 0.989 (0.010) \\
Credit Risk & 0.893 (0.011) & 0.849 (0.004) & 0.851 (0.000) & 0.899 (0.012) & 0.949 (0.002) & 0.910 (0.003) \\
Cust. Churn & 0.754 (0.016) & 0.784 (0.056) & 0.837 (0.000) & 0.757 (0.012) & 0.838 (0.003) & 0.832 (0.009) \\
Heart Disease & 0.874 (0.027) & 0.546 (0.076) & 0.945 (0.001) & 0.789 (0.044) & 0.950 (0.004) & 0.853 (0.030) \\
Recidivism & 0.668 (0.012) & 0.669 (0.024) & 0.716 (0.000) & 0.625 (0.004) & 0.715 (0.005) & 0.703 (0.010) \\
\hline
\end{tabular}
\end{table*}

To assess the impact of the number of pathways in the first hidden layer on model performance, we analyzed the relationship between the number of pathways and the average test set MSE. Specifically, we selected the top five models with the lowest validation errors for each number of pathways, while also ensuring that the number of pathways did not exceed the total number of features available for each dataset. See the Appendix for further analysis on the relationship between remaining number of features and out-of-sample MSE.

To further highlight the uniqueness of our approach in simultaneous model estimation and feature selection, we provide an example of the inference procedure for two sample applicants from the credit risk dataset. Figure \ref{fig:model_inference_v2} demonstrates the inference examples corresponding to high-risk and low-risk applicants, respectively, showcasing the distribution of individual features and the predicted probability output. We selected six pathways with the highest marginal increase in generalization performance, as shown in the appendix, and plot the density plot of each pathway before and after their respective transformations.

In this experiment, SparXnet identified several important features, most notably the loan’s interest rate and the applicant’s annual income. The results can be visualised in Figure \ref{fig:model_inference_v2}. As expected, the model predicts that higher interest rates are associated with a higher probability of default, while higher applicant income correlates with a lower probability of default. Interestingly, in both cases, the model detects thresholds where the risk sharply increases. For instance, interest rates above 20 percent are linked to a very high probability of default. Regarding annual income, the initial portion of the graph (corresponding to lower-income applicants) shows a steady decrease in risk as income increases. For middle earners, the risk remains relatively constant, then drops to a negligible level in the high-income bracket.

Note that the model was trained with an additional temperature parameter to promote the saturation of the weights learned in the first layer, thus accelerating feature selection. Indeed, unimportant features show zero or near-zero weights, demonstrating the efficacy of this feature selection mechanism. The model, trained with the same hyperparameters as the best model with six pathways and the lowest validation loss, achieves an AUC of 0.82, which is comparable to alternative models. See more details in the Appendix on weight saturation in the softmax layer. 

Our framework also facilitates transparent tracking of the decision-making process. This prediction results from a linear combination of transformed pathways and a softmax operation in the last layer. The score is then contrasted with those of the broader population, facilitating a comprehensive credit risk assessment \citep{liu2023integrated}. For instance, in Figure \ref{fig:model_inference_v2}, the high-risk applicant, indicated via the red line, is applying for a very high interst loan, which results in a high default probability, despite the applicant's income being in the `safe' range. In contrast, the low-risk applicant, indicated in blue, remains within safe ranges for both features, resulting in a low default probability.

\begin{comment}
\begin{figure}[h!]
    \includegraphics[width=1\linewidth]{}
    \centering
  
\end{figure}
\end{comment}

\begin{center}
\begin{figure}[h!]
	\includegraphics[width=.95\linewidth]{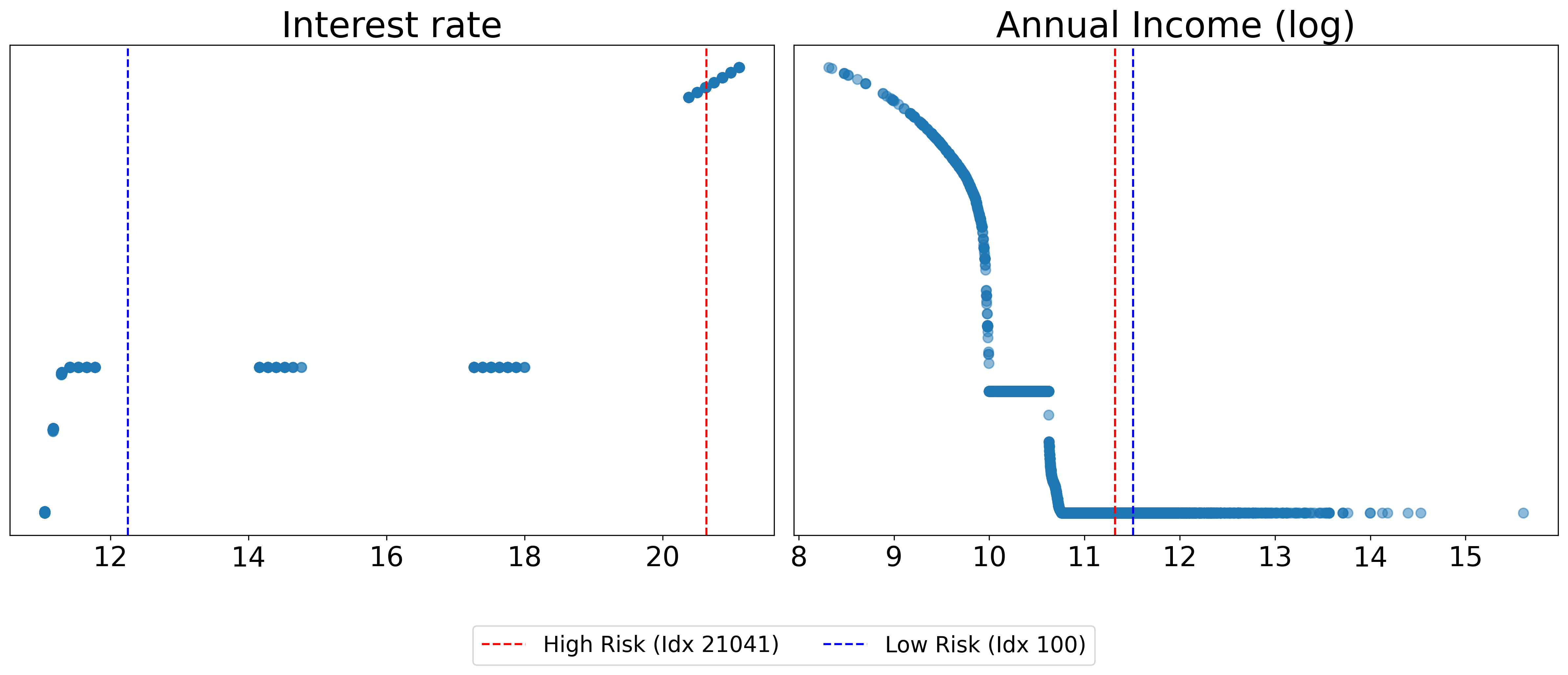}
		\includegraphics[width=.95\linewidth]{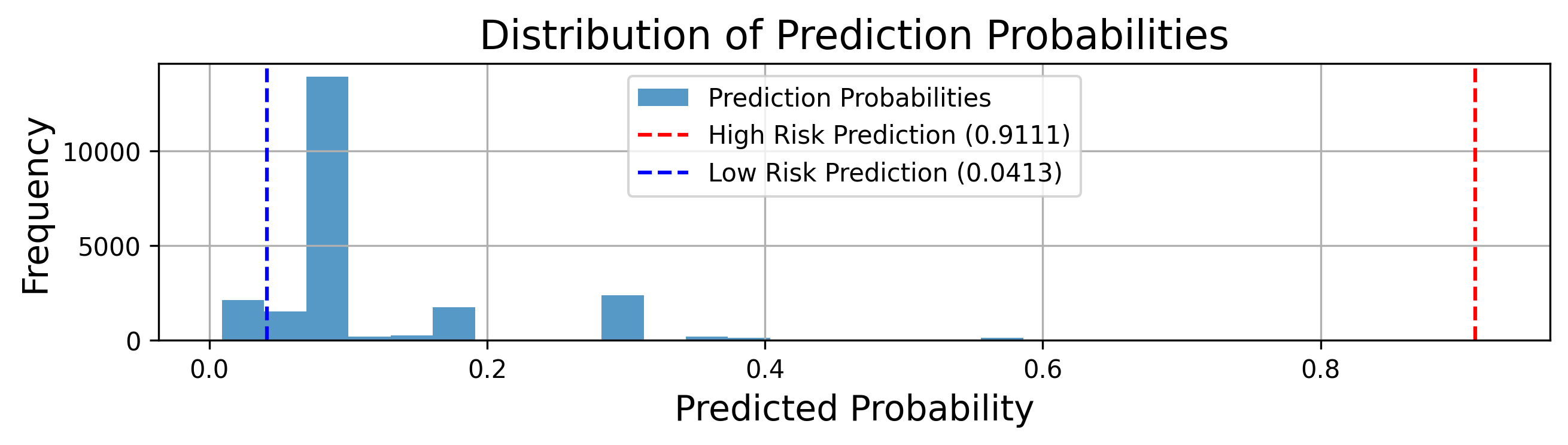}
          \caption{Illustrating the model inference for two credit applicants in the credit risk dataset.}
    \label{fig:model_inference_v2}
    \end{figure}
\end{center}

Note that this ability to assess individual feature-level critical regions during inference showcases the strength of SparXnet in explainable sparse estimation. Importantly, this inferential framework enables the interpretation of factors driving high-risk applicants. One possible downside to the approach is that the softmax may not always saturate, leading to some of the feature selection nodes incorporating multiple features. Whilst this can improve representation power and accuracy~\citep{liu2024kan}, this is done at the cost of some of the interpretability gains that define SparXnet.  Mitigating this phenomenon through a more effective tuning of the temperature parameter is an interesting avenue of research for future work. In addition, the smoothness of some of the predictive feature transformations could be further improved with gradient regularization. Lastly, we acknowledge that the arbitrary linear combination involved at the last layer means the level of interpretability still falls short of purely rules-based interpretable methods~\citep{rudin2019stop}. Replacing the last linear combination layer with an even more interpretable strategy is left to future work.

\section{Conclusion}
\label{sec:conclusion}
Our proposed SparXnet provides an innovative, theoretically guaranteed, and empirically effective approach in the field of sparse estimation. We have introduced a parsimonious neural network architecture and a training procedure for feature selection in applications with small datasets and high interpretability requirements. Our architecture comprises a softmax layer that selects individual features, trainable 1-dimensional Lipschitz functions, and a final linear layer. The Lipschitz functions are learned by neural networks.  We show that the sample complexity of SparXnet only scales like the number of chosen features, with a logarithmic dependence on the total number of features involved in the inputs. In addition, the sample complexity is independent of the number of parameters involved in each transformation function, as long as those functions are Lipschitz. We have demonstrated through synthetic data experiments that SparXnet can successfully select the right feature and recover the ground truth function in controlled settings. In real-life datasets, we show that SparXnet can equal or surpass alternatives, despite considerably reduced model complexity and significantly improved interpretability.

% \section{Appendix}

% \newpage

% \bibliography{bibliography}

% \section{Acknowledgments}

\bigskip
\noindent 
\bibliographystyle{abbrvnat}

\bibliography{bibliography}

\appendix

\title{Appendix}

	\onecolumn

\renewcommand{\thesection}{{\Alph{section}}}
\renewcommand{\thesubsection}{\Alph{section}.\arabic{subsection}}
\renewcommand{\thesubsubsection}{\Roman{section}.\arabic{subsection}.\arabic{subsubsection}}
\setcounter{secnumdepth}{-1}
\setcounter{secnumdepth}{3}

\section{Missing proofs}
\subsection{Proof of Main Theorem}
 % ~\ref{thm:maintheorem}
\label{sec:proofmain}

Recall the following theorem on the covering number of classes of Lipschitz functions. 
\begin{theorem}[Covering number of Lipschitz function balls, see~\cite{von2004distance}, Theorem 17 page 684, see also~\cite{Tikhomirov1993}]
	
	Let $\mathcal{X}$ be a connected and centred metric space (i.e. for all $A\subset \mathcal{X}$ with $\diam(A)\leq 2r$ there exists an $x\in \mathcal{X}$ such that $d(x,a)\leq r$ for all $a\in A$).  Let $B$ denote the set of $1$-Lipschitz functions from $\mathcal{X}$ to $\R$  which are uniformly bounded by $\diam (\mathcal{X})$.
	For any $\epsilon>0$ we have the following bound on the covering number of the class $B$ as a function of the covering number of $\mathcal{X}$:
	\begin{align}
		\mathcal{N}\left(B,\epsilon,\|\nbull\|_{\infty}\right)\leq \left(   \left\lceil \frac{2\diam(\mathcal{X})}{\epsilon}\right \rceil+1\right) \times 2^{\mathcal{N}(\mathcal{X},\frac{\epsilon}{2},d)}.
	\end{align}
	
\end{theorem}

Applying the above one dimensional space, we immediately obtain: 
\begin{proposition}
	\label{prop:liplinf}
	Let $\mathcal{F}_L$ denote the set of $L$-Lipschitz functions from $[-C,C]$ to $[-CL,CL]$.   
	We have the following bound on the covering number of $\mathcal{F}_L$ with respect to the $L^\infty$ (uniform) norm on functions: 
	\begin{align}
		\log\left(\mathcal{N}(\mathcal{F}, \epsilon, \|\nbull\|_{\infty})\right)&\leq \log \left(   \left\lceil \frac{4CL}{\epsilon}\right \rceil+1\right) +    \left[\left\lceil\frac{2CL}{\epsilon}\right\rceil+1\right]\log(2)\nonumber \\
		&\leq  \left[\frac{2CL}{\epsilon}+2\right]\log\left( \frac{4CL}{\epsilon} +2\right).
	\end{align}
\end{proposition}
\begin{proof}
	
	Follows from taking the following cover of $[-C,C]$: $\left[ [-C,C]\cap \epsilon  \mathbb{Z} \right]$, which has cardinality less than $\left[\left\lceil\frac{2C}{\epsilon}\right\rceil+1\right]$.

\end{proof}

Recall the following classic proposition:
\begin{proposition}[cf.~\citep{bookhighprob,bartlet98,pisier,Ledent_21_Norm-based}]
	\label{posthocify}
	Let $B_\beta$ denote the ball of radius $\beta$ in $\mathbb{R}^d$ with respect to the $L^1$ norm.  We have
	\begin{equation}
		\log\left(\mathcal{N}(B_{\beta},\epsilon,\|\nbull\|_{2})\right)\leq \left \lceil\frac{\beta^2}{\epsilon^2}\right \rceil\log(2d)
	\end{equation}

\end{proposition}

In the case of an  $L^\infty$ norm over the  samples, the  following  much deeper result holds~\citep{ZhangCover} (Theorem 4, page 537, cf. also~\cite{Ledent_21_Norm-based}, Proposition 4, p. 8284): 
\begin{proposition}
	\label{MaureySup}%\color{red}
	Let $N,d\in\mathbb{N}$, $a,b>0$. Suppose we are given $N$ data points collected as the rows of a matrix $X\in \mathbb{R}^{N\times d}$, with $\|X_{i,\nbull}\|_2\leq b,\forall i=1,\ldots,N$. For $U_{a,b}(X)=\big\{X\alpha:\|\alpha\|_2\leq a,\alpha\in\rbb^d\big\}$, we have
	\small %\vspace{-0.2cm}
	\begin{align*}
		\log_2\mathcal{N}\left(U_{a,b}(X),\epsilon,\|\nbull\|_{\infty}  \right)\leq \frac{36a^2b^2}{\epsilon^2}\log_2\left(\frac{8abN}{\epsilon}+6N+1\right).
	\end{align*}
\end{proposition}

\begin{proposition}

	Consider the following function class $\mathcal{F}:=$
	\begin{align}
 \label{eq:defineF}
		&\Bigg\{ F\in\R^d\rightarrow \R: F(x)=    \sum_{k=1}^K \theta_k f_k\left( \sum_{u=1}^d W^k_u x_u    \right) \> \text{for}  \>  \|f_k\|_{\lip} \leq L  (\forall k),\> \sum_{k=1}^K |\theta_k|\leq \Gamma,  \> W_u^k>0 \> (\forall u,k), \> \sum_{u=1}^dW_u^k \leq 1 (\forall k   )     \Bigg\}.\nonumber
	\end{align}
	Assume $K\leq d$.
	Consider a dataset $S=x^1,\ldots,x^N$ such that $|x^i|_{\max}\leq \chi$  for all $i\leq N$.  
	We have the following bound on the $L^\infty$ covering number of $\mathcal{F}$: 	
	\begin{align}
 \log\mathcal{N}_{\infty}\left(    \mathcal{F} ,\epsilon, S \right) \leq
	 \left[\frac{216K\chi^2  L^2 \Gamma^2 }{\epsilon^2}+3\right]\log_2\left(\frac{12\chi LdN}{\epsilon} +6dN+1\right).
	\end{align}
\end{proposition}

\begin{proof}

	Let $\bar{\mathcal{W}}:=\left\{ w\in\R^{d}:\sum_{u}w_u\leq 1  \right \} $ and let $\Theta$ denote the set of admissible $\theta$s. Let $\mathcal{W}=\bigotimes_{k=1}^K\bar{\mathcal{W}}$ denote the set of admissible $W$s. 
	
	By applying Proposition~\ref{MaureySup} we can find a cover $\bar{\mathcal{C}}_w\subset\bar{W}$ such that for all $w\in \bar{\mathcal{W}}$, there exists a $\bar{w}\in\bar{\mathcal{W}}$ such that for any $u=1,2,\ldots,d$, we have 
	\begin{align}
		\label{eq:firstguaranteepre}
		\left| 	\left\langle e_u,w-\bar{w}\right\rangle \right|  \leq \frac{\epsilon}{3L\chi  \Gamma}
	\end{align}
	and  
	\begin{align}
		\label{eq:firstcountpre}
		\log_2 (\#(\bar{\mathcal{C}}_w))\leq \frac{108\chi^2  L^2 \Gamma^2  }{\epsilon^2}\log_2\left(\frac{8\chi L  d \Gamma}{\epsilon}+6d+1\right).
	\end{align}
	From this, it also follows naturally that we can define a cover $\mathcal{C}_w:=\bigotimes_{k=1}^K \bar{C}_w$of the whole space $\mathcal{W}$ as $\bigotimes_{k=1}^K \bar{W}$, where for any $W\in\mathcal{W}$ we define $\bar{W}=\left(\bar{W}^1,\bar{W}^2,\ldots,\bar{W}^K\right)$
	
	Note that equation~\eqref{eq:firstguaranteepre} guarantees that for any $x\in S$ we have 
	\begin{align}
		\label{eq:firstguaranteereal}
		\left| 	\left\langle x,W-\bar{W}\right\rangle \right|_{\max}&=\min_k 	\left| 	\left\langle x,W^k-\bar{W}^k\right\rangle \right|\nonumber \\&\leq \frac{\epsilon}{3L\chi }\chi \sum_u(W^k_u)=\frac{\epsilon}{3L}.
	\end{align}
	
	Trivially, we  also have  from Equation~\eqref{eq:firstcountpre}:
	\begin{align}
		\label{eq:firstcountreal}
		\log_2 (\#(\mathcal{C}_w))\leq \frac{108K\chi^2  L^2 \Gamma^2  }{\epsilon^2}\log_2\left(\frac{8\chi L  d \Gamma }{\epsilon}+6d+1\right).
	\end{align}
	
	Next, by Proposition~\ref{prop:liplinf}, there exists an $\epsilon/3\Gamma$-uniform cover $\mathcal{C}_f$ of the space  $\{ f:  [-\chi,\chi]\rightarrow [-\chi L ,\chi L ]: \|f\|_{\lip}\leq L\}$ with cardinality satisfying 
	\begin{align}
		\label{eq:secondcount}
		\log_2\left(    \mathcal{C}_f   \right)
		&\leq  \left[\frac{6\chi L\Gamma}{\epsilon}+2\right]\log_2 \left( \frac{12\chi L\Gamma}{\epsilon} +2\right).
	\end{align}

	For any $x$ with $\|x\|_{\max}\leq \chi$, any $W\in\mathcal{W}$ and any $f_1,\ldots,f_K$ with $\|f_k\|_{\lip}\leq L$ ($\forall k$) we certainly have   $\left|f_k\left(\left\langle  W^k,x\right\rangle\right )\right |\leq \chi L $, which implies  $\left\|\left(f_1(\langle W^1,x^i\rangle),\ldots,f_K(\langle W^K,x^i\rangle) \right)\right\|_2\leq \sqrt{K}\chi$.  Thus, another application of Proposition~\ref{MaureySup} with $\mathcal{Y}_{W,f}$ as a "training set", there exists a cover $\mathcal{C}_{\theta,W,f}
	$ of the space $\Theta$ such that for all  $\theta\in\Theta$,  there exists a $\bar{\theta}\in\mathcal{C}_{\theta,W,f}$  such that  for all  $y$ in $\mathcal{Y}_{w,f}$, 
	\begin{align}
		\label{eq:thirdguarantee}
		\left|\left\langle 	\theta -\bar{\theta}, y\right\rangle \right|\leq \frac{\epsilon}{3}.
	\end{align}
	and 
	\begin{align}
		\label{eq:thirdcount}
		\log_2( \#\mathcal{C}_{\theta,w,f})&\leq   \frac{108K\Gamma^2\chi^2  L^2 }{\epsilon^2}\log_2\left(\frac{8\Gamma \chi L \sqrt{K} N}{\epsilon}+6N+1\right).
	\end{align} 
	
	We now take our final cover of $\mathcal{F}$ as follows: 
	\begin{align}
		&\mathcal{C}= \Bigg\{  F\in\R^d\rightarrow \R: F(x)=    \sum_{k=1}^K \theta_k f_k\left( \sum_{u=1}^d W^i_u x_u    \right) :   \nonumber  \\ &\quad\quad\quad\quad\quad\quad\quad \quad \quad w\in\mathcal{C}_w, f_1,\ldots,f_K \in\mathcal{C}_f,\theta\in\mathcal{C}_{\theta,w,f}   \Bigg\}\nonumber .
	\end{align}
	
	For any $F\in\mathcal{F}$ let $\bar{F}:=  \sum_{k=1}^K \bar{\theta}_k \bar{f}_k\left( \sum_{u=1}^d \bar{W}^i_u x_u    \right)$   where  $\bar{W}$ (resp. $\bar{f}$, $\bar{\theta}$) is the cover element of  $\mathcal{C}_w$ (resp. $\mathcal{C}_f$, $\mathcal{C}_{\theta,\bar{f},\bar{W}}$) associated to $W$ (resp. $f$,$\theta$). We clearly have 
	\begin{align}
			\left|  F(x)-\bar{F}(x)\right|&\leq   \left|F(x)\sum_{k=1}^K \theta_k f_k\left( \sum_{u=1}^d \bar{W}^i_u x_u    \right) \right|  +\left|  \sum_{k=1}^K \theta_k f_k\left( \sum_{u=1}^d \bar{W}^i_u x_u    \right) -\sum_{k=1}^K \theta_k \bar{f}_k\left( \sum_{u=1}^d \bar{W}^i_u x_u    \right) \right|\nonumber \\
		&\quad \quad \quad \quad \quad \quad \quad \quad\quad \quad \quad \quad\quad \quad \quad \quad\quad \quad \quad \quad + \left|\sum_{k=1}^K \theta_k \bar{f}_k\left( \sum_{u=1}^d \bar{W}^i_u x_u    \right)  -\bar{F}(x)\right|\\
		&\leq \frac{\epsilon}{3L\Gamma}\|f\|_{\lip} \left(\sum \theta_k\right)+\left (\sum \theta_k\right ) \frac{\epsilon}{3\Gamma }+\frac{\epsilon}{3} \\& \leq \epsilon.
	\end{align}
	Thus, $\mathcal{C}$ is indeed an $\epsilon$-cover of $\mathcal{F}$.

	Furthermore, by Equations~\eqref{eq:firstcountreal}, ~\eqref{eq:secondcount}, ~\eqref{eq:thirdcount}  we have 
	\begin{align}
		\log_2\left(\#(\mathcal{C})\right) &\leq \frac{216K\chi^2  L^2 \Gamma^2 }{\epsilon^2}\log_2\left(\frac{8\chi L  dN}{\epsilon}+6dN+1\right)  + \left[\frac{6\chi L \Gamma }{\epsilon}+2\right]\log_2 \left( \frac{12\chi L\Gamma }{\epsilon} +2\right) \\
		&\leq \left[\frac{216K\chi^2  L^2  \Gamma ^2}{\epsilon^2}+3\right]\log_2\left(\frac{12\chi  \Gamma LdN}{\epsilon} +6dN+1\right),
	\end{align}  
	as expected. 
	
\end{proof}

Next, we apply Dudley's entropy formula (Proposition~\ref{prop:dudley}) to bound the Rademacher complexity of the function class $\mathcal{F}$ as defined above. 

\begin{corollary}
	\label{cor:rademabound}
	For any training set $S$ of size $N$ we  have the following bound on the Rademacher complexity of the function class $\mathcal{F}$  defined above: 
	\begin{align}
		\rad_S(\mathcal{F})\leq  \frac{12}{\sqrt{N}} \left[15\chi L\Gamma \sqrt{K}+3\right]\sqrt{\log_2\left(12dN^2\left[\chi L\Gamma  +1\right]\right) }\log(N).
	\end{align}
\end{corollary}
\begin{proof}
	Applying Proposition~\ref{prop:dudley}  with $\alpha=\frac{1}{N}$  and $p=\infty$ we obtain:  
	\begin{align}
		   \rad(\mathcal{F})&\leq \frac{4}{N}+\frac{12}{\sqrt{N}}  \int_{\frac{1}{N}}^1      \sqrt{\log \mathcal{N}(\mathcal{F}|S,\epsilon,\|\nbull\|_{\infty}) }      \\
		&\leq \frac{12}{\sqrt{N}}  \int_{\frac{1}{N}}^1  \sqrt{\left[\frac{216K\chi^2  L^2  \Gamma ^2}{\epsilon^2}+3\right]\log_2\left(\frac{12\chi  \Gamma LdN}{\epsilon} +6dN+1\right)} d\epsilon  
		+\frac{4}{N}  \\
		&\leq \frac{12}{\sqrt{N}}\left[1+2\sqrt{\log_2\left(12dN^2\chi L\Gamma +6dN+1\right)} \right] \label{eq:prolemsfirst} + \frac{12}{\sqrt{N}}\int_{\frac{1}{N}}^1  \frac{15\chi L \Gamma \sqrt{K}}{\epsilon}\sqrt{\log_2\left(12dN^2\chi L\Gamma +6dN+1\right)}d\epsilon   \nonumber\\
		&\leq \frac{12}{\sqrt{N}} \left[15\chi L\Gamma \sqrt{K}+3\right]\sqrt{\log_2\left(12dN^2\left[\chi L\Gamma  +1\right]\right) }\log(N),
	\end{align}
	as expected. At equation~\eqref{eq:prolemsfirst}, we have used the fact $\frac{12}{\sqrt{N}}\geq \frac{4}{N}$ and replaced $\epsilon$ by its lower bound $1/N$ in the logarithms. 
\end{proof}

Finally, the above results allow us to prove our main theorem~\ref{thm:maintheorem}.

\begin{proof}[Proof of Theorem~\ref{thm:maintheorem}]
	Follows from a direct application of Corollary~\ref{cor:rademabound},  Talagrand's concentration Lemma~\ref{lem:talagrand} and Theorem~\ref{rademachh} using the classic lemma of Statistical Learning Theory. 
	
	Indeed, by the above results,  with probability $\geq 1-\delta$ over the draw of the training set, for any $f\in\mathcal{F}$,  we have:  
	\begin{align}
\left|	\ell(f)-\hat{\ell}(f)\right|\leq & \frac{12\mathcal{L}}{\sqrt{N}} \left[15\chi L\Gamma \sqrt{K}+3\right]\sqrt{\log_2\left(12dN^2\left[\chi L\Gamma  +1\right]\right) }\log(N) +3B\sqrt{\frac{\log(2/\delta)}{2N}}
	\end{align}
	for all $f\in\mathcal{F}$, where $\ell(f):=\E_{x,y} \ell\left(f(x),y\right)$ and $\hat{\ell}(f)= \hat{\E}_{x,y} \ell\left(f(x),y\right) =\frac{1}{N}\sum_{i=1}
	^N	  \ell\left(f(x^i),y^i\right) $. 
	
	Next, under the same high probability event, we have:
	\begin{align}
		&\ell(\hat{f})-\ell(f^*) = 	\ell(\hat{f})-\hat{\ell}(\hat{f})+ \hat{\ell}(\hat{f}) -\hat{\ell}(f^*)+ \hat{\ell}(f^*) -\ell(f^*)   \\& \leq \ell(\hat{f})-\hat{\ell}(\hat{f})+ \hat{\ell}(f^*) -\ell(f^*)  \nonumber \\ &\leq \frac{24\mathcal{L}}{\sqrt{N}} \left[15\chi L\Gamma \sqrt{K}+3\right]\sqrt{\log_2\left(12dN^2\left[\chi L\Gamma  +1\right]\right) }\log(N)+6B\sqrt{\frac{\log(2/\delta)}{2N}},
	\end{align}
	as expected.
\end{proof}

\begin{proof}[Proof of Corollary~\ref{cor:maincor}]
By a direct application of Theorem~\ref{thm:maintheorem}, we have that the excess risk is bounded with high probability as \\ $O\left(\sqrt{\frac{KL^2}{N}\log(d+L+1)\log^2(N)}\right)$. This implies that an excess risk of $\epsilon$ or less can be reached (w.h.p.) as long as 
\begin{align}
\frac{N}{\log^2(N)}\geq O\left(\frac{KL^2 \log(d+L+1)}{\epsilon^2}\right).
\end{align}
This in turn is satisfied as long as 
\begin{align}
N\geq O\left( \frac{KL^2 \log(d+L+1)}{\epsilon^2} \log^2\left(\frac{KL^2 \log(d+L+1)}{\epsilon^2} \right)    \right) \\
\geq O\left( \frac{KL^2}{\epsilon^2} \log^3\left(\frac{KL^2 \log(d+L+1)}{\epsilon^2} \right)  \right),
\end{align}
as expected. 
Indeed, if $x=y\log^2(y)$ and $x,y\geq 1$ we have by direct calculation
	\begin{align}
	\frac{x}{\log^2(x)}&=\frac{y\log^2(y)}{[\log(y)+\log(\log^2(y))]^2}\nonumber \\
	&\geq \frac{y\log(y)}{\log(y)+2\log(y)}\nonumber \\
	&=\frac{y}{3},
	\end{align}
	where at the second line we have used the inequality $\log(y)\leq y$.
\end{proof}

\subsection{Some classic lemmas}
In this subsection, we recall some classic known results which are required to prove our bounds.  This is purely for the reader's convenience and no claim of originality is made. 
Recall the definition of the Rademacher complexity of a function class $\mathcal{F}$:
\begin{definition}
	Let $\mathcal{F}$ be a class of real-valued functions with range $X$. Let also $S=(x_1,x_2,\ldots,x_n)\in X$ be $n$ samples from the domain of the functions in $\mathcal{F}$. The empirical Rademacher complexity $\rad_S(\mathcal{F})$ of $\mathcal{F}$ with respect to $x_1,x_2,\ldots,x_n$ is defined by
	\begin{align}
		\rad_S(\mathcal{F}):=\mathbb{E}_{\delta}\sup_{f\in\mathcal{F}}\frac{1}{n}\sum_{i=1}^n  \delta_if(x_i),
	\end{align}
	where $\delta=(\delta_1,\delta_2,\ldots,\delta_n)\in\{\pm 1\}^n$ is a set of $n$ iid Rademacher random variables (which take values $1$ or $-1$ with probability $0.5$ each).
\end{definition}

Recall the following classic theorem~\citep{rademach}:
\begin{theorem}
	\label{rademachh}
	Let $Z,Z_1,\ldots,Z_n$ be iid random variables taking values in a set $\mathcal{Z}$. Consider a set of functions $\mathcal{F}\in[0,1]^{\mathcal{Z}}$. $\forall \delta>0$, we have with probability $\geq 1-\delta$ over the draw of the sample $S$ that $$\forall f \in \mathcal{F}, \quad \mathbb{E}(f(Z))\leq \frac{1}{n}\sum_{i=1}^nf(z_i)+2\rad_{S}(\mathcal{F})+3\sqrt{\frac{\log(2/\delta)}{2n}}. $$
\end{theorem}

We will also need the following result (Dudley's entropy formula~\citep{Spectre,Ledent_21_Norm-based})

\begin{proposition}
	\label{prop:dudley}
	Let $\mathcal{F}$ be a real-valued function class taking values in $[0,1]$, and assume that $0\in \mathcal{F}$. Let $S$ be a finite sample of size $n$. For any $2\leq p\leq \infty$, we have the following relationship between the Rademacher complexity  $\rad(\mathcal{F}|_{S})$ and the covering number $\mathcal{N}(\mathcal{F}|S,\epsilon,\|\nbull\|_{p})$.
	\begin{align*}
		\rad(\mathcal{F}|_{S})\leq \inf_{\alpha>0} \left(     4\alpha+\frac{12}{\sqrt{n}}  \int_{\alpha}^1      \sqrt{\log \mathcal{N}(\mathcal{F}|S,\epsilon,\|\nbull\|_{p}) }     \right),
	\end{align*}
	where the norm $\|\nbull\|_{p}$ on $\mathbb{R}^m$ is defined by $\|x\|_{p}^p=\frac{1}{n}(\sum_{i=1}^m|x_i|^p)$.
\end{proposition}

\begin{lemma}[Talagrand contraction lemma (cf.~\cite{ledoux91probability} see also~\cite{meir03} page 846)]
	\label{lem:talagrand}
	
	Let $g:\R\rightarrow \R$ be 1-Lipschitz. 
	Consider the set of functions  $\left\{ f_i(\theta) , i\leq N \right\}$  (on $\{1,2,\ldots,N\}$) depending on a parameter $\theta\in\Theta$.

We have
	\begin{align}
		\E_\sigma \sup_{\theta\in\Theta} \left\{ \sum_{i=1}^N\sigma_i g(f_i(\theta))      \right\} \leq \E_\sigma  \E_X \sup_{\theta\in\Theta} \left\{  \sum_{i=1}^N \sigma_i f_i(\theta) \right\},
	\end{align}
	where the $\sigma_i$s are i.i.d. Rademacher variables.	
\end{lemma}

\begin{table}[h!] \small
	\centering
	\begin{tabular}{c|c}
		Notation & Meaning  \\
		\midrule
  $\mathcal{N}(\nbull)$ & Covering number of function class \\ \midrule
  $N$ & Number of samples\\
  $d$ & total number of features\\
  $K$ & number of features to be selected\\
  $x^i$ & $i$th sample\\
  $w^k_u$ & weight for $k$th node and $u$th input \\ & (before softmax)\\
  $W$& 1st layer weights after softmax\\
  $f_k:\R\rightarrow \R $ & transformation function for $k$th node\\
  $\theta$ & weights at the last linear layer \\
  $F$ & prediction function\\
    $\tau$  & Temperature parameter\\
  \midrule
  $\chi$ & Upper bound on $\|x\|_{\max}$\\
  $L$ & Upper bound on the Lipschitz constant of the  $f_k$s\\
    $\ell$ & Loss function \\  $B$ & Upper bound on loss function\\
  $\mathcal{L}$ & Upper bound on Lipschitz Constant of loss function\\
  $\Gamma$ & Upper bound on $\sum_k \theta_k$\\
  $\mathcal{F}$ & Function class defined in~\eqref{eq:defineF}\\
  \end{tabular}
  \end{table}

\section{Additional experiment details}

\subsection{Description of real datasets}

We entertain six different datasets in our analysis. These datasets provide diverse challenges in data preprocessing, feature selection, and model evaluation, enabling us to showcase the flexibility and robustness of our approaches.

\begin{itemize}
    \item The \textbf{Heart Disease dataset} from the UCI machine learning repository includes 303 instances with 14 features, used to classify individuals into heart disease categories. The features include age, sex, chest pain type, resting blood pressure, serum cholesterol, fasting blood sugar, and others related to cardiac conditions.

    \item The \textbf{Adult Income dataset} predicts whether income exceeds \$50K/yr based on census data, with a total of 14 features such as age, work class, education, marital status, occupation, race, gender, and native country. The binary target variable distinguishes between high and low income.

    \item The \textbf{Breast Cancer Wisconsin dataset} is a classification task involving 569 instances with 30 numeric features. This dataset is used to predict whether a tumor is malignant or benign based on measurements such as the mean radius, mean texture, mean smoothness, and other cellular properties.

    \item The \textbf{Recidivism dataset} from the COMPAS tool provides data on recidivism risk, which is used to classify individuals based on the likelihood of reoffending within two years. The dataset includes various features related to a defendant's criminal history, demographics, and COMPAS score.

    \item The \textbf{Customer Churn dataset} predicts customer churn based on 21 characteristics derived from the data of a telecommunications company, including tenure, monthly charges, total charges, and various categorical features related to customer services and demographics. The target variable indicates whether a customer has churned.

    \item The \textbf{Credit Bureau dataset}, as mentioned earlier, involves the classification task of predicting loan default based on 7 features related to the personal and financial characteristics of borrowers.
\end{itemize}

For each dataset, our code performs standard preprocessing steps, such as handling missing values, encoding categorical variables, and scaling numeric features. We further perform stratified sampling when splitting the data into training and testing sets.

Finally, we utilize pipelines that integrate these preprocessing steps with model fitting, ensuring that our approach is consistent and reproducible across different datasets. This setup allows us to evaluate various machine learning models effectively, making it a robust framework for both academic research and practical applications.

\subsection{Impact of number of pathway on out-of-sample AUC}

As depicted in Figure \ref{fig:num_pathway_realdata}, there is generally an increasing trend in out-of-sample AUC as more features are incorporated. However, this trend is not strictly monotonic. Variations across different datasets suggest that the sensitivity to the number of pathways can differ significantly. Given the inherent uncertainty in the optimal number of features to select in real-world datasets, this analysis provides a data-driven guide to determining the most effective number of features to include in the model.

\begin{figure}[htbp]
    \centering
    \includegraphics[width=0.75\linewidth]{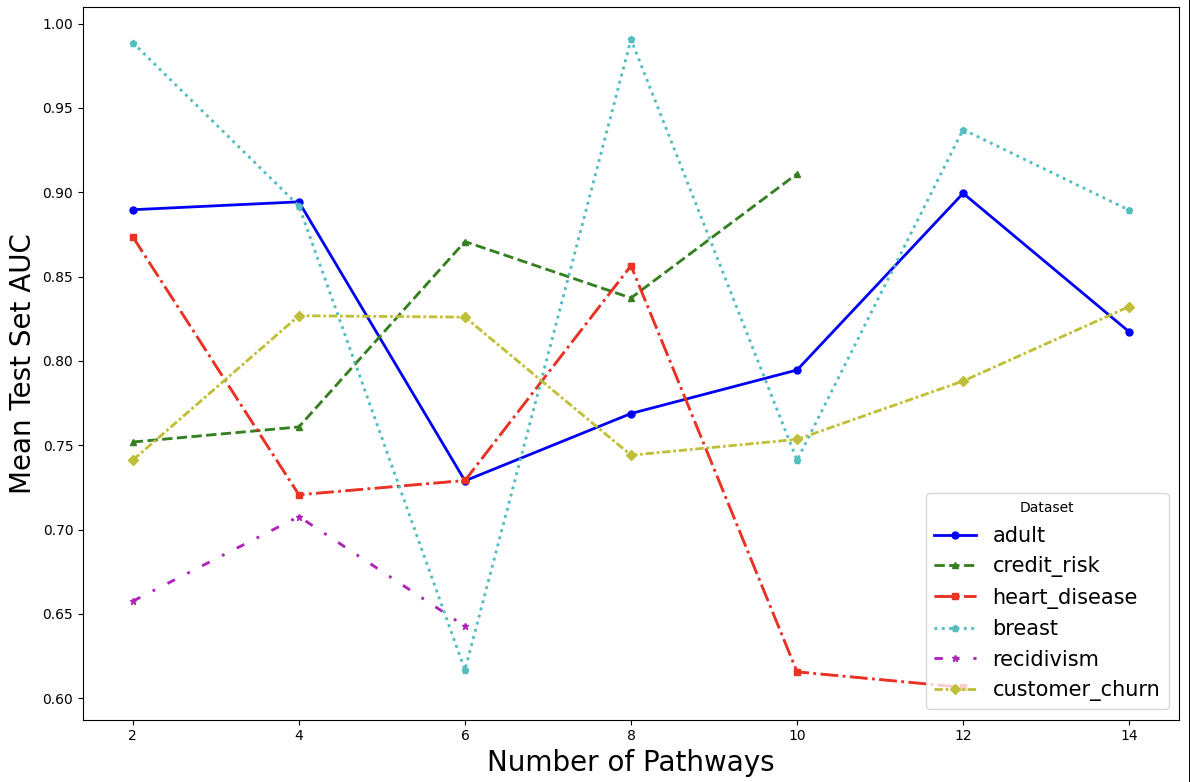}
    \caption{Comparison of mean test AUC by the number of pathways across datasets. The analysis highlights the variability in performance sensitivity to the number of pathways, serving as a guide for optimal feature selection in practical applications.}
    \label{fig:num_pathway_realdata}
\end{figure}

\subsection{Model Saturation}

The interpretability of our model hinges on the selection of a subset of predictive features through the softmax transformation of the weights between the input layer and the first hidden layer. As illustrated in Figure \ref{fig:weight_saturation}, nearly all the learned weights between these two layers are saturated, effectively considering only one input feature for each pathway. In this context, five features are selected: loan grade, loan percent income, loan intent, loan interest rate, and personal income, with the loan interest rate feature being selected by two pathways.

\begin{figure}
    \centering
    \includegraphics[width=0.75\linewidth]{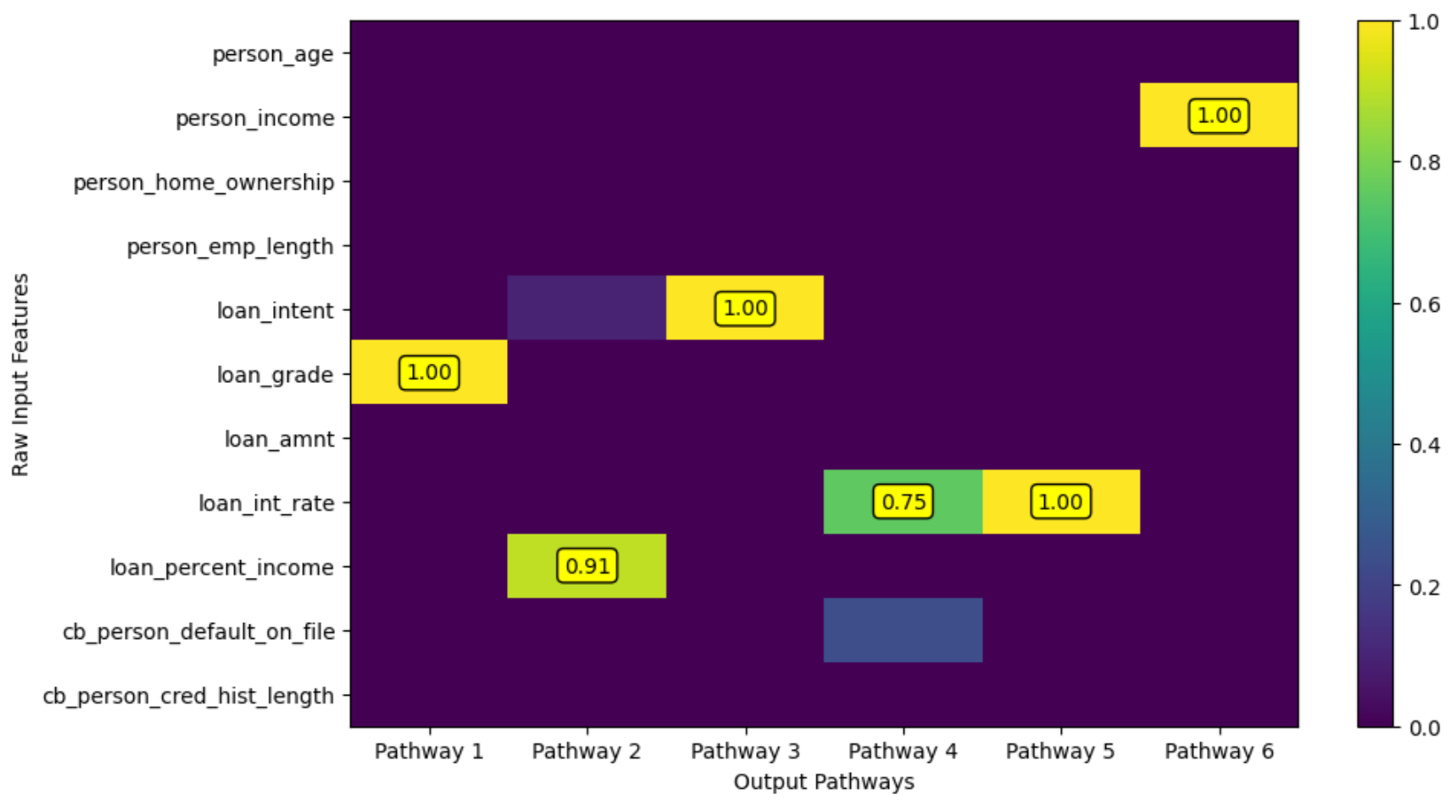}
    \caption{Heatmap of weights between the input layer and the first hidden layer after the softmax operation. The figure indicates the selected features for each pathway, completing the feature selection process. Most weights are saturated, being exactly one, with only pathways two and four approaching one.}
    \label{fig:weight_saturation}
\end{figure}

\end{document}

% --- supplement: appendix.tex ---

\appendix
\maketitle

\onecolumn 
\linenumbers

\section{Missing proofs}
\subsection{Proof of Main Theorem}
 % ~\ref{thm:maintheorem}
\label{sec:proofmain}

Recall the following theorem on the covering number of classes of Lipschitz functions. 
\begin{theorem}[Covering number of Lipschitz function balls, see~\cite{von2004distance}, Theorem 17 page 684, see also~\cite{Tikhomirov1993}]
	
	Let $\mathcal{X}$ be a connected and centred metric space (i.e. for all $A\subset \mathcal{X}$ with $\diam(A)\leq 2r$ there exists an $x\in \mathcal{X}$ such that $d(x,a)\leq r$ for all $a\in A$).  Let $B$ denote the set of $1$-Lipschitz functions from $\mathcal{X}$ to $\R$  which are uniformly bounded by $\diam (\mathcal{X})$.
	For any $\epsilon>0$ we have the following bound on the covering number of the class $B$ as a function of the covering number of $\mathcal{X}$:
	\begin{align}
		\mathcal{N}\left(B,\epsilon,\|\nbull\|_{\infty}\right)\leq \left(   \left\lceil \frac{2\diam(\mathcal{X})}{\epsilon}\right \rceil+1\right) \times 2^{\mathcal{N}(\mathcal{X},\frac{\epsilon}{2},d)}.
	\end{align}
	
\end{theorem}

Applying the above one dimensional space, we immediately obtain: 
\begin{proposition}
	\label{prop:liplinf}
	Let $\mathcal{F}_L$ denote the set of $L$-Lipschitz functions from $[-C,C]$ to $[-CL,CL]$.   
	We have the following bound on the covering number of $\mathcal{F}_L$ with respect to the $L^\infty$ (uniform) norm on functions: 
	\begin{align}
		\log\left(\mathcal{N}(\mathcal{F}, \epsilon, \|\nbull\|_{\infty})\right)&\leq \log \left(   \left\lceil \frac{4CL}{\epsilon}\right \rceil+1\right) +    \left[\left\lceil\frac{2CL}{\epsilon}\right\rceil+1\right]\log(2)\nonumber \\
		&\leq  \left[\frac{2CL}{\epsilon}+2\right]\log\left( \frac{4CL}{\epsilon} +2\right).
	\end{align}
\end{proposition}
\begin{proof}
	
	Follows from taking the following cover of $[-C,C]$: $\left[ [-C,C]\cap \epsilon  \mathbb{Z} \right]$, which has cardinality less than $\left[\left\lceil\frac{2C}{\epsilon}\right\rceil+1\right]$.

\end{proof}

Recall the following classic proposition:
\begin{proposition}[cf.~\citep{bookhighprob,bartlet98,pisier,Ledent_21_Norm-based}]
	\label{posthocify}
	Let $B_\beta$ denote the ball of radius $\beta$ in $\mathbb{R}^d$ with respect to the $L^1$ norm.  We have
	\begin{equation}
		\log\left(\mathcal{N}(B_{\beta},\epsilon,\|\nbull\|_{2})\right)\leq \left \lceil\frac{\beta^2}{\epsilon^2}\right \rceil\log(2d)
	\end{equation}

\end{proposition}

In the case of an  $L^\infty$ norm over the  samples, the  following  much deeper result holds~\cite{PointedoutbyYunwen} (Theorem 4, page 537, cf. also~\cite{Ledent_21_Norm-based}, Proposition 4, p. 8284): 
\begin{proposition}
	\label{MaureySup}%\color{red}
	Let $N,d\in\mathbb{N}$, $a,b>0$. Suppose we are given $N$ data points collected as the rows of a matrix $X\in \mathbb{R}^{N\times d}$, with $\|X_{i,\nbull}\|_2\leq b,\forall i=1,\ldots,N$. For $U_{a,b}(X)=\big\{X\alpha:\|\alpha\|_2\leq a,\alpha\in\rbb^d\big\}$, we have
	\small %\vspace{-0.2cm}
	\begin{align*}
		\log_2\mathcal{N}\left(U_{a,b}(X),\epsilon,\|\nbull\|_{\infty}  \right)\leq \frac{36a^2b^2}{\epsilon^2}\log_2\left(\frac{8abN}{\epsilon}+6N+1\right).
	\end{align*}
\end{proposition}

\begin{proposition}

	Consider the following function class $\mathcal{F}:=$
	\begin{align}
 \label{eq:defineF}
		&\Bigg\{ F\in\R^d\rightarrow \R: F(x)=    \sum_{k=1}^K \theta_k f_k\left( \sum_{u=1}^d W^k_u x_u    \right) \> \text{for}  \>  \|f_k\|_{\lip} \leq L  (\forall k),\> \sum_{k=1}^K |\theta_k|\leq \Gamma,  \> W_u^k>0 \> (\forall u,k), \> \sum_{u=1}^dW_u^k \leq 1 (\forall k   )     \Bigg\}.\nonumber
	\end{align}
	Assume $K\leq d$.
	Consider a dataset $S=x^1,\ldots,x^N$ such that $|x^i|_{\max}\leq \chi$  for all $i\leq N$.  
	We have the following bound on the $L^\infty$ covering number of $\mathcal{F}$: 	
	\begin{align}
 \log\mathcal{N}_{\infty}\left(    \mathcal{F} ,\epsilon, S \right) \leq
	 \left[\frac{216K\chi^2  L^2 \Gamma^2 }{\epsilon^2}+3\right]\log_2\left(\frac{12\chi LdN}{\epsilon} +6dN+1\right).
	\end{align}
\end{proposition}

\begin{proof}

	Let $\bar{\mathcal{W}}:=\left\{ w\in\R^{d}:\sum_{u}w_u\leq 1  \right \} $ and let $\Theta$ denote the set of admissible $\theta$s. Let $\mathcal{W}=\bigotimes_{k=1}^K\bar{\mathcal{W}}$ denote the set of admissible $W$s. 
	
	By applying Proposition~\ref{MaureySup} we can find a cover $\bar{\mathcal{C}}_w\subset\bar{W}$ such that for all $w\in \bar{\mathcal{W}}$, there exists a $\bar{w}\in\bar{\mathcal{W}}$ such that for any $u=1,2,\ldots,d$, we have 
	\begin{align}
		\label{eq:firstguaranteepre}
		\left| 	\left\langle e_u,w-\bar{w}\right\rangle \right|  \leq \frac{\epsilon}{3L\chi  \Gamma}
	\end{align}
	and  
	\begin{align}
		\label{eq:firstcountpre}
		\log_2 (\#(\bar{\mathcal{C}}_w))\leq \frac{108\chi^2  L^2 \Gamma^2  }{\epsilon^2}\log_2\left(\frac{8\chi L  d \Gamma}{\epsilon}+6d+1\right).
	\end{align}
	From this, it also follows naturally that we can define a cover $\mathcal{C}_w:=\bigotimes_{k=1}^K \bar{C}_w$of the whole space $\mathcal{W}$ as $\bigotimes_{k=1}^K \bar{W}$, where for any $W\in\mathcal{W}$ we define $\bar{W}=\left(\bar{W}^1,\bar{W}^2,\ldots,\bar{W}^K\right)$
	
	Note that equation~\eqref{eq:firstguaranteepre} guarantees that for any $x\in S$ we have 
	\begin{align}
		\label{eq:firstguaranteereal}
		\left| 	\left\langle x,W-\bar{W}\right\rangle \right|_{\max}&=\min_k 	\left| 	\left\langle x,W^k-\bar{W}^k\right\rangle \right|\nonumber \\&\leq \frac{\epsilon}{3L\chi }\chi \sum_u(W^k_u)=\frac{\epsilon}{3L}.
	\end{align}
	
	Trivially, we  also have  from Equation~\eqref{eq:firstcountpre}:
	\begin{align}
		\label{eq:firstcountreal}
		\log_2 (\#(\mathcal{C}_w))\leq \frac{108K\chi^2  L^2 \Gamma^2  }{\epsilon^2}\log_2\left(\frac{8\chi L  d \Gamma }{\epsilon}+6d+1\right).
	\end{align}
	
	Next, by Proposition~\ref{prop:liplinf}, there exists an $\epsilon/3\Gamma$-uniform cover $\mathcal{C}_f$ of the space  $\{ f:  [-\chi,\chi]\rightarrow [-\chi L ,\chi L ]: \|f\|_{\lip}\leq L\}$ with cardinality satisfying 
	\begin{align}
		\label{eq:secondcount}
		\log_2\left(    \mathcal{C}_f   \right)
		&\leq  \left[\frac{6\chi L\Gamma}{\epsilon}+2\right]\log_2 \left( \frac{12\chi L\Gamma}{\epsilon} +2\right).
	\end{align}

	For any $x$ with $\|x\|_{\max}\leq \chi$, any $W\in\mathcal{W}$ and any $f_1,\ldots,f_K$ with $\|f_k\|_{\lip}\leq L$ ($\forall k$) we certainly have   $\left|f_k\left(\left\langle  W^k,x\right\rangle\right )\right |\leq \chi L $, which implies  $\left\|\left(f_1(\langle W^1,x^i\rangle),\ldots,f_K(\langle W^K,x^i\rangle) \right)\right\|_2\leq \sqrt{K}\chi$.  Thus, another application of Proposition~\ref{MaureySup} with $\mathcal{Y}_{W,f}$ as a "training set", there exists a cover $\mathcal{C}_{\theta,W,f}
	$ of the space $\Theta$ such that for all  $\theta\in\Theta$,  there exists a $\bar{\theta}\in\mathcal{C}_{\theta,W,f}$  such that  for all  $y$ in $\mathcal{Y}_{w,f}$, 
	\begin{align}
		\label{eq:thirdguarantee}
		\left|\left\langle 	\theta -\bar{\theta}, y\right\rangle \right|\leq \frac{\epsilon}{3}.
	\end{align}
	and 
	\begin{align}
		\label{eq:thirdcount}
		\log_2( \#\mathcal{C}_{\theta,w,f})&\leq   \frac{108K\Gamma^2\chi^2  L^2 }{\epsilon^2}\log_2\left(\frac{8\Gamma \chi L \sqrt{K} N}{\epsilon}+6N+1\right).
	\end{align} 
	
	We now take our final cover of $\mathcal{F}$ as follows: 
	\begin{align}
		&\mathcal{C}= \Bigg\{  F\in\R^d\rightarrow \R: F(x)=    \sum_{k=1}^K \theta_k f_k\left( \sum_{u=1}^d W^i_u x_u    \right) :   \nonumber  \\ &\quad\quad\quad\quad\quad\quad\quad \quad \quad w\in\mathcal{C}_w, f_1,\ldots,f_K \in\mathcal{C}_f,\theta\in\mathcal{C}_{\theta,w,f}   \Bigg\}\nonumber .
	\end{align}
	
	For any $F\in\mathcal{F}$ let $\bar{F}:=  \sum_{k=1}^K \bar{\theta}_k \bar{f}_k\left( \sum_{u=1}^d \bar{W}^i_u x_u    \right)$   where  $\bar{W}$ (resp. $\bar{f}$, $\bar{\theta}$) is the cover element of  $\mathcal{C}_w$ (resp. $\mathcal{C}_f$, $\mathcal{C}_{\theta,\bar{f},\bar{W}}$) associated to $W$ (resp. $f$,$\theta$). We clearly have 
	\begin{align}
			\left|  F(x)-\bar{F}(x)\right|&\leq   \left|F(x)\sum_{k=1}^K \theta_k f_k\left( \sum_{u=1}^d \bar{W}^i_u x_u    \right) \right|  +\left|  \sum_{k=1}^K \theta_k f_k\left( \sum_{u=1}^d \bar{W}^i_u x_u    \right) -\sum_{k=1}^K \theta_k \bar{f}_k\left( \sum_{u=1}^d \bar{W}^i_u x_u    \right) \right|\nonumber \\
		&\quad \quad \quad \quad \quad \quad \quad \quad\quad \quad \quad \quad\quad \quad \quad \quad\quad \quad \quad \quad + \left|\sum_{k=1}^K \theta_k \bar{f}_k\left( \sum_{u=1}^d \bar{W}^i_u x_u    \right)  -\bar{F}(x)\right|\\
		&\leq \frac{\epsilon}{3L\Gamma}\|f\|_{\lip} \left(\sum \theta_k\right)+\left (\sum \theta_k\right ) \frac{\epsilon}{3\Gamma }+\frac{\epsilon}{3} \\& \leq \epsilon.
	\end{align}
	Thus, $\mathcal{C}$ is indeed an $\epsilon$-cover of $\mathcal{F}$.

	Furthermore, by Equations~\eqref{eq:firstcountreal}, ~\eqref{eq:secondcount}, ~\eqref{eq:thirdcount}  we have 
	\begin{align}
		\log_2\left(\#(\mathcal{C})\right) &\leq \frac{216K\chi^2  L^2 \Gamma^2 }{\epsilon^2}\log_2\left(\frac{8\chi L  dN}{\epsilon}+6dN+1\right)  + \left[\frac{6\chi L \Gamma }{\epsilon}+2\right]\log_2 \left( \frac{12\chi L\Gamma }{\epsilon} +2\right) \\
		&\leq \left[\frac{216K\chi^2  L^2  \Gamma ^2}{\epsilon^2}+3\right]\log_2\left(\frac{12\chi  \Gamma LdN}{\epsilon} +6dN+1\right),
	\end{align}  
	as expected. 
	
\end{proof}

Next, we apply Dudley's entropy formula (Proposition~\ref{prop:dudley}) to bound the Rademacher complexity of the function class $\mathcal{F}$ as defined above. 

\begin{corollary}
	\label{cor:rademabound}
	For any training set $S$ of size $N$ we  have the following bound on the Rademacher complexity of the function class $\mathcal{F}$  defined above: 
	\begin{align}
		\rad_S(\mathcal{F})\leq  \frac{12}{\sqrt{N}} \left[15\chi L\Gamma \sqrt{K}+3\right]\sqrt{\log_2\left(12dN^2\left[\chi L\Gamma  +1\right]\right) }\log(N).
	\end{align}
\end{corollary}
\begin{proof}
	Applying Proposition~\ref{prop:dudley}  with $\alpha=\frac{1}{N}$  and $p=\infty$ we obtain:  
	\begin{align}
		   \rad(\mathcal{F})&\leq \frac{4}{N}+\frac{12}{\sqrt{N}}  \int_{\frac{1}{N}}^1      \sqrt{\log \mathcal{N}(\mathcal{F}|S,\epsilon,\|\nbull\|_{\infty}) }      \\
		&\leq \frac{12}{\sqrt{N}}  \int_{\frac{1}{N}}^1  \sqrt{\left[\frac{216K\chi^2  L^2  \Gamma ^2}{\epsilon^2}+3\right]\log_2\left(\frac{12\chi  \Gamma LdN}{\epsilon} +6dN+1\right)} d\epsilon  
		+\frac{4}{N}  \\
		&\leq \frac{12}{\sqrt{N}}\left[1+2\sqrt{\log_2\left(12dN^2\chi L\Gamma +6dN+1\right)} \right] \label{eq:prolemsfirst} + \frac{12}{\sqrt{N}}\int_{\frac{1}{N}}^1  \frac{15\chi L \Gamma \sqrt{K}}{\epsilon}\sqrt{\log_2\left(12dN^2\chi L\Gamma +6dN+1\right)}d\epsilon   \nonumber\\
		&\leq \frac{12}{\sqrt{N}} \left[15\chi L\Gamma \sqrt{K}+3\right]\sqrt{\log_2\left(12dN^2\left[\chi L\Gamma  +1\right]\right) }\log(N),
	\end{align}
	as expected. At equation~\eqref{eq:prolemsfirst}, we have used the fact $\frac{12}{\sqrt{N}}\geq \frac{4}{N}$ and replaced $\epsilon$ by its lower bound $1/N$ in the logarithms. 
\end{proof}

Finally, the above results allow us to prove our main theorem~\ref{}{}

\begin{proof}[Proof of Theorem~\ref{thm:maintheorem}]
	Follows from a direct application of Corollary~\ref{cor:rademabound},  Talagrand's concentration Lemma~\ref{lem:talagrand} and Theorem~\ref{rademachh} using the classic lemma of Statistical Learning Theory. 
	
	Indeed, by the above results,  with probability $\geq 1-\delta$ over the draw of the training set, for any $f\in\mathcal{F}$,  we have:  
	\begin{align}
\left|	\ell(f)-\hat{\ell}(f)\right|\leq & \frac{12\mathcal{L}}{\sqrt{N}} \left[15\chi L\Gamma \sqrt{K}+3\right]\sqrt{\log_2\left(12dN^2\left[\chi L\Gamma  +1\right]\right) }\log(N) +3B\sqrt{\frac{\log(2/\delta)}{2N}}
	\end{align}
	for all $f\in\mathcal{F}$, where $\ell(f):=\E_{x,y} \ell\left(f(x),y\right)$ and $\hat{\ell}(f)= \hat{\E}_{x,y} \ell\left(f(x),y\right) =\frac{1}{N}\sum_{i=1}
	^N	  \ell\left(f(x^i),y^i\right) $. 
	
	Next, under the same high probability event, we have:
	\begin{align}
		&\ell(\hat{f})-\ell(f^*) = 	\ell(\hat{f})-\hat{\ell}(\hat{f})+ \hat{\ell}(\hat{f}) -\hat{\ell}(f^*)+ \hat{\ell}(f^*) -\ell(f^*)   \\& \leq \ell(\hat{f})-\hat{\ell}(\hat{f})+ \hat{\ell}(f^*) -\ell(f^*)  \nonumber \\ &\leq \frac{24\mathcal{L}}{\sqrt{N}} \left[15\chi L\Gamma \sqrt{K}+3\right]\sqrt{\log_2\left(12dN^2\left[\chi L\Gamma  +1\right]\right) }\log(N)+6B\sqrt{\frac{\log(2/\delta)}{2N}},
	\end{align}
	as expected.
\end{proof}

\begin{proof}[Proof of Corollary~\ref{cor:maincor}]
By a direct application of Theorem~\ref{thm:maintheorem}, we have that the excess risk is bounded with high probability as \\ $O\left(\sqrt{\frac{KL^2}{N}\log(d+L+1)\log^2(N)}\right)$. This implies that an excess risk of $\epsilon$ or less can be reached (w.h.p.) as long as 
\begin{align}
\frac{N}{\log^2(N)}\geq O\left(\frac{KL^2 \log(d+L+1)}{\epsilon^2}\right).
\end{align}
This in turn is satisfied as long as 
\begin{align}
N\geq O\left( \frac{KL^2 \log(d+L+1)}{\epsilon^2} \log^2\left(\frac{KL^2 \log(d+L+1)}{\epsilon^2} \right)    \right) \\
\geq O\left( \frac{KL^2}{\epsilon^2} \log^3\left(\frac{KL^2 \log(d+L+1)}{\epsilon^2} \right)  \right),
\end{align}
as expected. 
Indeed, if $x=y\log^2(y)$ and $x,y\geq 1$ we have by direct calculation
	\begin{align}
	\frac{x}{\log^2(x)}&=\frac{y\log^2(y)}{[\log(y)+\log(\log^2(y))]^2}\nonumber \\
	&\geq \frac{y\log(y)}{\log(y)+2\log(y)}\nonumber \\
	&=\frac{y}{3},
	\end{align}
	where at the second line we have used the inequality $\log(y)\leq y$.
\end{proof}

\subsection{Some classic lemmas}
In this subsection, we recall some classic known results which are required to prove our bounds.  This is purely for the reader's convenience and no claim of originality is made. 
Recall the definition of the Rademacher complexity of a function class $\mathcal{F}$:
\begin{definition}
	Let $\mathcal{F}$ be a class of real-valued functions with range $X$. Let also $S=(x_1,x_2,\ldots,x_n)\in X$ be $n$ samples from the domain of the functions in $\mathcal{F}$. The empirical Rademacher complexity $\rad_S(\mathcal{F})$ of $\mathcal{F}$ with respect to $x_1,x_2,\ldots,x_n$ is defined by
	\begin{align}
		\rad_S(\mathcal{F}):=\mathbb{E}_{\delta}\sup_{f\in\mathcal{F}}\frac{1}{n}\sum_{i=1}^n  \delta_if(x_i),
	\end{align}
	where $\delta=(\delta_1,\delta_2,\ldots,\delta_n)\in\{\pm 1\}^n$ is a set of $n$ iid Rademacher random variables (which take values $1$ or $-1$ with probability $0.5$ each).
\end{definition}

Recall the following classic theorem~\cite{rademach}:
\begin{theorem}
	\label{rademachh}
	Let $Z,Z_1,\ldots,Z_n$ be iid random variables taking values in a set $\mathcal{Z}$. Consider a set of functions $\mathcal{F}\in[0,1]^{\mathcal{Z}}$. $\forall \delta>0$, we have with probability $\geq 1-\delta$ over the draw of the sample $S$ that $$\forall f \in \mathcal{F}, \quad \mathbb{E}(f(Z))\leq \frac{1}{n}\sum_{i=1}^nf(z_i)+2\rad_{S}(\mathcal{F})+3\sqrt{\frac{\log(2/\delta)}{2n}}. $$
\end{theorem}

We will also need the following result (Dudley's entropy formula~\cite{Spectre,Ledent_21_Norm-based})

\begin{proposition}
	\label{prop:dudley}
	Let $\mathcal{F}$ be a real-valued function class taking values in $[0,1]$, and assume that $0\in \mathcal{F}$. Let $S$ be a finite sample of size $n$. For any $2\leq p\leq \infty$, we have the following relationship between the Rademacher complexity  $\rad(\mathcal{F}|_{S})$ and the covering number $\mathcal{N}(\mathcal{F}|S,\epsilon,\|\nbull\|_{p})$.
	\begin{align*}
		\rad(\mathcal{F}|_{S})\leq \inf_{\alpha>0} \left(     4\alpha+\frac{12}{\sqrt{n}}  \int_{\alpha}^1      \sqrt{\log \mathcal{N}(\mathcal{F}|S,\epsilon,\|\nbull\|_{p}) }     \right),
	\end{align*}
	where the norm $\|\nbull\|_{p}$ on $\mathbb{R}^m$ is defined by $\|x\|_{p}^p=\frac{1}{n}(\sum_{i=1}^m|x_i|^p)$.
\end{proposition}

\begin{lemma}[Talagrand contraction lemma (cf.~\cite{ledoux91probability} see also~\cite{meir03} page 846)]
	\label{lem:talagrand}
	
	Let $g:\R\rightarrow \R$ be 1-Lipschitz. 
	Consider the set of functions  $\left\{ f_i(\theta) , i\leq N \right\}$  (on $\{1,2,\ldots,N\}$) depending on a parameter $\theta\in\Theta$.

We have
	\begin{align}
		\E_\sigma \sup_{\theta\in\Theta} \left\{ \sum_{i=1}^N\sigma_i g(f_i(\theta))      \right\} \leq \E_\sigma  \E_X \sup_{\theta\in\Theta} \left\{  \sum_{i=1}^N \sigma_i f_i(\theta) \right\},
	\end{align}
	where the $\sigma_i$s are i.i.d. Rademacher variables.	
\end{lemma}

\begin{table}[h!] \small
	\centering
	\begin{tabular}{c|c}
		Notation & Meaning  \\
		\midrule
  $\mathcal{N}(\nbull)$ & Covering number of function class \\ \midrule
  $N$ & Number of samples\\
  $d$ & total number of features\\
  $K$ & number of features to be selected\\
  $x^i$ & $i$th sample\\
  $w^k_u$ & weight for $k$th node and $u$th input \\ & (before softmax)\\
  $W$& 1st layer weights after softmax\\
  $f_k:\R\rightarrow \R $ & transformation function for $k$th node\\
  $\theta$ & weights at the last linear layer \\
  $F$ & prediction function\\
    $\tau$  & Temperature parameter\\
    $\Lambda$ & Entropy coefficient from~\eqref{eq:entropy}\\
  \midrule
  $\chi$ & Upper bound on $\|x\|_{\max}$\\
  $L$ & Upper bound on the Lipschitz constant of the  $f_k$s\\
    $\ell$ & Loss function \\  $B$ & Upper bound on loss function\\
  $\mathcal{L}$ & Upper bound on Lipschitz Constant of loss function\\
  $\Gamma$ & Upper bound on $\sum_k \theta_k$\\
  $\mathcal{F}$ & Function class defined in~\eqref{eq:defineF}\\
  \end{tabular}
  \end{table}

\section{Additional experiment details}

\subsection{Description of real datasets}

We entertain six different datasets for classification setting: 
\begin{itemize}
    \item Our first dataset concerns \textbf{diabetes prediction}~\cite{efron2004least} and is a popular healthcare-related dataset for regression task with 442 instances, where the target variable indicates the disease progression one year after baseline. The 10 input features include age, sex, body mass index (BMI), average blood pressure, and six blood serum measurements. 
    \item Our second dataset is the \textbf{credit bureau dataset}\footnote{https://www.kaggle.com/datasets/laotse/credit-risk-dataset}, which describes the loan default status of physical persons based on individual characteristics, such as \textit{age}, \textit{income}, etc. The dataset used in this classification task includes 28,638 observations (including 22,435 negative examples and 6,203 positive examples) and 7 features, all of which are either continuous or binary.
    \item Our third dataset is related to \textbf{California housing dataset}, which contains California census data on metrics such as the population, median income, median housing price, and so on for each block group in California, with the average house value being the target variable. With a total of 16,512 observations and 8 features, it is a widely used benchmark dataset for regression problems. 
\end{itemize}

We entertain six different datasets in our analysis. These datasets provide diverse challenges in data preprocessing, feature selection, and model evaluation, enabling us to showcase the flexibility and robustness of our approaches.

\begin{itemize}
    \item The \textbf{Heart Disease dataset} from the UCI machine learning repository includes 303 instances with 14 features, used for classifying individuals into heart disease categories. The features include age, sex, chest pain type, resting blood pressure, serum cholesterol, fasting blood sugar, and others related to cardiac conditions.

    \item The \textbf{Adult Income dataset} predicts whether income exceeds \$50K/yr based on census data, with 14 features including age, work class, education, marital status, occupation, race, gender, and native country. The binary target variable distinguishes between high and low income.

    \item The \textbf{Breast Cancer Wisconsin dataset} is a classification task involving 569 instances with 30 numeric features. This dataset is used to predict whether a tumor is malignant or benign based on measurements like mean radius, mean texture, mean smoothness, and other cellular properties.

    \item The \textbf{Recidivism dataset} from the COMPAS tool provides data on recidivism risk, used to classify individuals based on the likelihood of reoffending within two years. The dataset includes various features related to a defendant's criminal history, demographics, and COMPAS score.

    \item The \textbf{Customer Churn dataset} predicts customer churn based on 21 features derived from a telecommunications company’s data, including tenure, monthly charges, total charges, and various categorical features related to customer services and demographics. The target variable indicates whether a customer has churned.

    \item The \textbf{Credit Bureau dataset}, as mentioned earlier, involves the classification task of predicting loan default based on 7 features related to the personal and financial characteristics of borrowers.
\end{itemize}

For each dataset, our code performs essential preprocessing steps, such as handling missing values, encoding categorical variables, and scaling numeric features. Depending on the task type (classification or regression), the data is then split into training and testing sets using stratified sampling for classification tasks or simple random sampling for regression tasks.

Finally, we utilize pipelines that integrate these preprocessing steps with model fitting, ensuring that our approach is consistent and reproducible across different datasets. This setup allows us to evaluate various machine learning models effectively, making it a robust framework for both academic research and practical applications.

% \subsection{Results on real data experiments}

% The result shows that our interpretable model gets the best of both worlds:  a sparse set of features that allow direct interpretation (which is traditionally limited to linear models only), and decent predictive performance that is comparable with the benchmark model (a more complex fully-connected networks that aim to deliver the best possible performance). We generally observe an increasing predictive accuracy as the number of nodes in the first hidden layer of the interpretable model increases, as expected. However, we further note a higher accuracy for the interpretable model for all cases of the credit risk dataset except for the first one. This phenomenon aligns with the common observation in sparse estimation: methods that rely on performance coefficient estimation and feature selection (such as Lasso regression) can sometimes deliver better predictive performance than other alternatives (such as unregularized linear regression or ridge regression).

% % Please add the following required packages to your document preamble:
% % \usepackage{multirow}
% \begin{table*}[]
% \caption{Comparing the average test-set predictive accuracy and the standard deviation (in brackets) of sparsenet with different levels of sparsity and alternative models with zero sparsity across five runs. Sparsenet performs on par with or better than other models, but at the same time offers a sparse and interpretable solution.}
% \label{table:real_data_multiple_run}
% \begin{tabular}{|c|c|cccccc|c}
% \cline{1-8}
% \multirow{2}{*}{\textbf{Dataset}}                                                  & \multirow{2}{*}{\textbf{Model type}} & \multicolumn{6}{c|}{\textbf{Number of nodes in the first hidden layer}}                                                                                                                                                   &  \\ \cline{3-8}
%                                                                                    &                                      & \multicolumn{1}{c|}{1}            & \multicolumn{1}{c|}{2}            & \multicolumn{1}{c|}{3}            & \multicolumn{1}{c|}{4}                     & \multicolumn{1}{c|}{5}                    & 6                    &  \\ \cline{1-8}
% \multirow{5}{*}{\begin{tabular}[c]{@{}c@{}}Diabetes disease\\ (RMSE)\end{tabular}} & sparsenet                            & \multicolumn{1}{c|}{153.4 (0.12)} & \multicolumn{1}{c|}{94.8 (0.92)}  & \multicolumn{1}{c|}{78.9 (1.6)}   & \multicolumn{1}{c|}{59.2 (0.43)}           & \multicolumn{1}{c|}{\textbf{54.3 (0.79)}} & 64.6 (0.56)          &  \\ \cline{2-8}
%                                                                                    & FCN                                  & \multicolumn{6}{c|}{56.5 (0.12)}                                                                                                                                                                                          &  \\ \cline{2-8}
%                                                                                    & Decision tree                        & \multicolumn{6}{c|}{71.2 (0.8)}                                                                                                                                                                                           &  \\ \cline{2-8}
%                                                                                    & Lasso regression                     & \multicolumn{6}{c|}{54.9 (1.21)}                                                                                                                                                                                          &  \\ \cline{2-8}
%                                                                                    & Ridge regression                     & \multicolumn{6}{c|}{54.6 (0.83)}                                                                                                                                                                                          &  \\ \cline{1-8}
% \multirow{5}{*}{\begin{tabular}[c]{@{}c@{}}Credit risk\\ (Accuracy)\end{tabular}}  & sparsenet                            & \multicolumn{1}{c|}{0.82 (0.15)}  & \multicolumn{1}{c|}{0.822 (0.35)} & \multicolumn{1}{c|}{0.826 (0.25)} & \multicolumn{1}{c|}{0.888 (0.06)} & 0.822 (0.23)                              & 0.826 (0.67)         &  \\ \cline{2-8}
%                                                                                    & FCN                                  & \multicolumn{6}{c|}{\textbf{0.917 (0.12)}}                                                                                                                                                                                &  \\ \cline{2-8}
%                                                                                    & Decision tree                        & \multicolumn{6}{c|}{0.903 (0.23)}                                                                                                                                                                                         &  \\ \cline{2-8}
%                                                                                    & \multirow{2}{*}{Logistic regression} & \multicolumn{6}{c|}{\multirow{2}{*}{0.849}}                                                                                                                                                                               &  \\
%                                                                                    &                                      & \multicolumn{6}{c|}{}                                                                                                                                                                                                     &  \\ \cline{1-8}
% \multirow{5}{*}{\begin{tabular}[c]{@{}c@{}}Housing\\ (RMSE)\end{tabular}}          & sparsenet                            & \multicolumn{1}{c|}{1.13 (0.054)} & \multicolumn{1}{c|}{1.07 (0.107)} & \multicolumn{1}{c|}{0.96 (0.026)} & \multicolumn{1}{c|}{0.75 (0.76)}           & 0.72 (0.012)                              & \textbf{0.62 (0.32)} &  \\ \cline{2-8}
%                                                                                    & FCN                                  & \multicolumn{6}{c|}{\textbf{0.58 (0.12)}}                                                                                                                                                                                 &  \\ \cline{2-8}
%                                                                                    & Decision tree                        & \multicolumn{6}{c|}{0.64 (0.36)}                                                                                                                                                                                          &  \\ \cline{2-8}
%                                                                                    & Lasso regression                     & \multicolumn{6}{c|}{0.75 (0.15)}                                                                                                                                                                                          &  \\ \cline{2-8}
%                                                                                    & Ridge regression                     & \multicolumn{6}{c|}{0.75 (0.32)}                                                                                                                                                                                          &  \\ \cline{1-8}
% \end{tabular}
% \end{table*}

\subsection{Impact of number of pathway on out-of-sample AUC}

As depicted in Figure \ref{fig:num_pathway_realdata}, there is generally an increasing trend in out-of-sample AUC as more features are incorporated. However, this trend is not strictly monotonic. Variations across different datasets suggest that the sensitivity to the number of pathways can differ significantly. Given the inherent uncertainty in the optimal number of features to select in real-world datasets, this analysis provides a data-driven guide to determining the most effective number of features to include in the model.

\begin{figure}[htbp]
    \centering
    \includegraphics[width=0.75\linewidth]{imgs/num_pathway_real_data.png}
    \caption{Comparison of mean test AUC by the number of pathways across datasets. The analysis highlights the variability in performance sensitivity to the number of pathways, serving as a guide for optimal feature selection in practical applications.}
    \label{fig:num_pathway_realdata}
\end{figure}

\subsection{Model Saturation}

The interpretability of our model hinges on the selection of a subset of predictive features through the softmax transformation of the weights between the input layer and the first hidden layer. As illustrated in Figure \ref{fig:weight_saturation}, nearly all the learned weights between these two layers are saturated, effectively considering only one input feature for each pathway. In this context, five features are selected: loan grade, loan percent income, loan intent, loan interest rate, and personal income, with the loan interest rate feature being selected by two pathways.

\begin{figure}
    \centering
    \includegraphics[width=0.75\linewidth]{imgs/weights_after_softmax.png}
    \caption{Heatmap of weights between the input layer and the first hidden layer after the softmax operation. The figure indicates the selected features for each pathway, completing the feature selection process. Most weights are saturated, being exactly one, with only pathways two and four approaching one.}
    \label{fig:weight_saturation}
\end{figure}

\bibliography{bibliography}